%% file: main.tex
\documentclass{article}

\PassOptionsToPackage{numbers,sort&compress}{natbib}
\usepackage[preprint]{neurips_2026} 

\usepackage[utf8]{inputenc} 
\usepackage[T1]{fontenc}    
\usepackage{hyperref}       
\usepackage{url}            
\usepackage{booktabs}       
\usepackage{amsfonts}       
\usepackage{nicefrac}       
\usepackage{microtype}      
\usepackage{xcolor}         

\usepackage{amsmath}
\usepackage{amssymb}
\usepackage{mathtools}
\usepackage{amsthm}
\usepackage{bbm}
\usepackage{url}
\usepackage{hyperref}

\usepackage{dblfloatfix}

\usepackage{todonotes}

\usepackage{algorithm}
\usepackage{algpseudocode}

\usepackage{etoc}

\newtheorem{theorem}{Theorem}

\newtheorem{lemma}{Lemma}

\input{math_commands.tex}

\def\S{\mathcal{S}}
\def\A{\mathcal{A}}
\def\R{\mathcal{R}}
\def\O{\mathcal{O}}
\def\I{\mathcal I}

\def\Q{\mathcal Q}

\def\A{\mathcal A}
\def\E{\mathbb E}
\def\V{\mathbb V}

\title{
PMCTS: 
Principled Parallelized Inference Time Scaling
with Particle Monte Carlo Tree Search 
}

\author{%
  Yaniv Oren$^{1,5}$\thanks{Equal contribution.} 
  \And Viliam Vadocz$^2$\footnotemark[1]
  \And Joery A. de Vries$^3$\thanks{Work done outside of Trent AI.} 
  \And Wendelin B{\"o}hmer$^1$ 
  \And Matthijs T. J. Spaan$^1$ 
  \And Hendrik Baier$^{4,5}$ 
  \And 
  \\
  $^1$Department of Intelligent Systems, TU Delft\\
  $^2$Department of Computer Science, ETH Zürich\\
  $^3$Trent AI Limited\\
  $^4$Information Systems, TU Eindhoven\\
  $^5$Centrum Wiskunde \& Informatica, Amsterdam\\
  \texttt{\{y.oren,j.w.bohmer,m.t.j.spaan\}@tudelft.nl}\\
  \texttt{vvadocz@student.ethz.ch}\\
  \texttt{joery@trent.ai}\\
  \texttt{h.j.s.baier@tue.nl}\\
}

\begin{document}
\etocdepthtag.toc{main}

\maketitle

\begin{abstract}
Monte Carlo Tree Search (MCTS) is a widely used approach for policy improvement and action selection in Reinforcement Learning.
Due to its sequential and deterministic nature, principled runtime-scaling of MCTS with parallel compute remains a major challenge.
We introduce Particle MCTS (PMCTS), a principled parallel MCTS algorithm suited for neural network evaluations and designed for GPU-acceleration with batch-parallelization.
We establish \textit{policy improvement} properties for modern MCTS variants, and show that PMCTS maintains them.
Empirically, PMCTS scales well with parallel compute and consistently outperforms or compares well to the popular heuristic-based baselines across a range of MCTS and RL evaluation domains, including the board games chess and Go and popular discrete action and continuous control benchmarks.
\end{abstract}

\input{sections/introduction}

\input{sections/background}

\input{sections/particle_mcts}
\input{sections/theoretical_analysis}

\input{sections/related_work}

\input{sections/experiments}

\input{sections/conclusions}

\begin{ack}
This work used the Dutch national e-infrastructure with the support of the SURF Cooperative using grant no. EINF-13454.
Research reported in this work was partially or completely facilitated by computational resources and support of the Delft AI Cluster (DAIC) at TU Delft (RRID: SCR\_025091), but remains the sole responsibility of the authors, not the DAIC team.
\end{ack}

\bibliographystyle{unsrtnat}
\bibliography{bib}


\appendix
\etocdepthtag.toc{appendix}

\newpage

{
  \etocsettagdepth{main}{none}
  \etocsettagdepth{appendix}{subsection}
  \etocsettocstyle{\section*{Appendix Contents}}{}
  \tableofcontents
}

\newpage

\input{appendix/symbols}

\input{appendix/pseudocode}

\input{appendix/derivations}

\input{appendix/proofs}

\input{appendix/additional_prior_work}

\input{appendix/additional_results}

\input{appendix/implementation_details}

\input{appendix/experimental_details}

\input{appendix/hyperparameters}



\end{document}

%% file: math_commands.tex
\usepackage{amsmath,amsfonts,bm}

\def\eqref#1{equation~\ref{#1}}

\def\1{\bm{1}}

\DeclareMathAlphabet{\mathsfit}{\encodingdefault}{\sfdefault}{m}{sl}
\SetMathAlphabet{\mathsfit}{bold}{\encodingdefault}{\sfdefault}{bx}{n}

\newcommand{\E}{\mathbb{E}}

\newcommand{\R}{\mathbb{R}}

\DeclareMathOperator*{\argmax}{arg\,max}


%% file: sections/introduction.tex
\section{Introduction}
\label{sec:intro}

Monte Carlo Tree Search \citep[MCTS,][]{UCT} is a popular search algorithm that plays a central role in modern reinforcement learning (RL).
MCTS performs \textit{policy improvement} and action selection through lookahead and evaluation (i.e., \textit{search}) in a model of the dynamics of the environment.
This paradigm has been popularized in RL by the AlphaZero family of algorithms \citep{AlphaGo,AlphaZero} and has since been successfully applied to a wide range of domains, including zero-sum games \citep{AlphaZero}, discrete-action, single-agent tasks \citep{muzero,AlphaTensor,AlphaDev,mzycodec} as well as continuous control \citep{effzero_v2}.
The wide applicability of MCTS, including for practical applications \citep{AlphaTensor,AlphaDev,mzycodec} and LLM inference scaling \citep{hao2023reasoning,zhou2023language,feng2024alphazero,antoniades2024swe,zhang2024rest,shi2025monte,guan2025rstar,misaki2025wider}, makes its inference-time scaling with parallel compute a critical challenge and the subject of significant attention \citep{parallel_uct,virtual_loss,massively_parallel_mcts_iclr,parallel_mcts_2,on_effective_parallelization_of_mcts,batch_mcts}.

Parallelizing MCTS is challenging for two main reasons: 
(I)
In each iteration, MCTS selects a trajectory for evaluation using conventionally deterministic selection policies that depend on statistics from previous iterations in a deterministic dynamics model \citep{UCT,puct,gumbel_mz}.
This makes the \textit{principled} selection of multiple \textit{unique} trajectories in parallel a significant challenge, because repeated parallel selection steps result in the same trajectory.
(II) Modern MCTS methods use a deterministic deep neural network (DNN) for evaluation.
This makes scaling the evaluation of a \textit{single} trajectory with parallel compute a significant challenge, because repeated evaluations of the same trajectory result in the same evaluation rather than improved evaluation.
Together, these properties also make independent instances of MCTS at the same state (\textit{root parallelization}) expand similar trees and return similar search results, which prevents root parallelization from being an effective inference-time scaling method and makes the selection of multiple unique trajectories in parallel a critical challenge.

To overcome these challenges popular methods use heuristic selection policies such as \textit{Virtual Losses} \citep{virtual_loss} and \textit{Virtual Means} \citep{batch_mcts} which rely on communication between processes during selection.
These methods suffer from three main limitations: 
first, they do not provide principled policy improvement.  Their heuristic selection policy does not have a clear interpretation as an improved policy and thus no guarantees that the search results in policy improvement.
Second, communication during selection hamstrings parallelization and precludes batched-parallelization, which is an effective and popular approach for GPU-acceleration in RL specifically and Deep Learning in general \citep{jax2018github}.
Third, they do not correctly account for the effect of duplicate trajectories on the evaluation.

To address these limitations, in this work we introduce \emph{Particle MCTS} (PMCTS), a novel parallel MCTS algorithm which is principled, well suited for GPU acceleration through batch-parallelization and correctly accounts for the presence of duplicate trajectories.
PMCTS samples multiple unique trajectories fully independently in parallel from a theoretically motivated, stochastic selection policy;
explicitly controls the entropy of the selection policy, to reduce the number of duplicate trajectories;
uses Sequential Monte Carlo (SMC, see \citep{chopin2020introduction}) within the framework of MCTS
to maintain principled estimates of the expected return with respect to the intended target policy and account for the presence of any remaining duplicate trajectories;
uses the Effective Sample Size (ESS, see \citep{chopin2020introduction}) to estimate the variance of the SMC expectation estimator and maintain principled updates to the search tree;
and last, uses importance sampling to mitigate the effect of over-commitment to catastrophic returns.

We establish \textit{policy improvement} theoretical guarantees for baseline MCTS as well as for PMCTS.
Empirically, PMCTS demonstrates strong inference-time scaling, outperforming or comparing well to the popular, heuristic-based baselines across a range of classical RL and MCTS benchmark domains, including hard zero-sum games such as chess and Go, discrete-action, single-agent tasks and classical continuous control.


%% file: sections/background.tex
\section{Background}
\label{sec:background}
In RL, the environment is represented by a Markov Decision Process \citep[MDP,][]{bellman1957markovian} $ \mathcal{M} = \langle \mathcal{S}, \mathcal{A}, \rho, R, P,
\gamma \rangle $. 
$ \mathcal{S} $ is a set of states, $ \mathcal{A}$ is a set of actions both of which we will consider as finite for presentation and analysis simplicity.
$ \rho $ is an initial state distribution,
$ R: \mathcal{S} \times \mathcal{A} \to \mathbb{R} $ is a bounded possibly stochastic reward function and $ P $ is a transition distribution $s' \sim P(\cdot | s, a)$.
The policy of the agent $\pi \in \Pi$ is defined as a distribution over actions $a \sim \pi(s)$ with probabilities specified by $\pi(a|s)$. 
The objective $ J_\pi $ is the maximization of the expected discounted return, or state-value $V^\pi(s)$:
\begin{align}
    \label{eq:rl_objective}
    J_\pi = \E_{\rho} \big[V^\pi(s_1)\big] 
    = 
    \E\Big[
        {\textstyle \sum_{t=1}^{H}} \gamma^{t-1}R(s_t, a_t)
    \Big], 
    \quad s_1 \sim \rho, \, a_t \sim \pi(s_t), \, s_{t+1} \sim P(s_t, a_t),
\end{align}
over the initial state distribution $\rho$.
The discount factor $0 < \gamma < 1$ is used when $H\to\infty$ to guarantee that $V^\pi$ remains bounded or for stability and otherwise usually $\gamma = 1$.
A state-action \textit{Q-value function} is defined as follows:
$Q^\pi(s, a) = \E[R(s, a) + \gamma V^\pi(s')]$.

\subsection{Search for policy improvement}
RL algorithms which have access to a dynamics model of the environment $\mathcal{M} = (P, R)$ (exact or approximate) often use \textit{search} to drive the iterative improvement of the acting policy \citep{moerland2023model}.
Search refers to the process of look-ahead in the model from a state $s$, often using a DNN-based prior policy~$\pi_\theta$ and state value function $v_\phi$, in order to extract an improved policy $\pi_{\text{search}}(s)$ at the current state~$s$.
A popular example is AlphaZero which uses MCTS for that purpose.

For a policy improvement operator $\I: \Pi \times \Q \to \Pi$, \textit{policy improvement} is formally defined: $\forall s \in \S: V^{\I(\pi, q)}(s) \geq V^{\pi}(s)$ and there is at least one state where this inequality is strict, unless $\pi$ is already optimal.
We define $\Q$ generally as the set of all bounded functions on the state-action space $ q \in \Q: \S \times \A \to \mathbb{R} $, to indicate that the operators are defined for exact as well as approximate values.
Search algorithms generally return: (I) A selected action $a$, (II) An improved policy $\pi_{\text{search}}(s)$ and (III) A value $V_{\text{search}}(s) \approx V^{\pi_{\text{search}}}(s) $.
$\pi_{\text{search}}(s)$ is used to train $\pi_\theta$ with cross entropy loss and $V_{\text{search}}(s)$ is used to produce bootstraps for TD-targets \citep{muzero} or even value targets directly \citep{oren2025epistemic}.

\subsection{Monte Carlo Tree Search (MCTS)}
The MCTS algorithm maintains a search tree over states with its root $s =: s_1$ representing the current state $s$ in the environment.
Traditionally, MCTS is used for search with an exact, deterministic model $\mathcal{M}$ and finite state-action environments.
For simplicity of presentation, we will maintain these assumptions hence.
However, MCTS has been used successfully for policy improvement and action selection with learned \cite{muzero} and stochastic transition \cite{stochastic_mz} models as well as in continuous state-action spaces~\citep{sampled_mz,effzero_v2}. None of the contributions we make in this paper disagree with these applications.
The tree is constructed by iterating (I) \textit{selection}, (II) \textit{expansion} and (III) \textit{backpropagation} $M$ times (often called \textit{number of expansions}, \textit{simulations} or the \textit{search budget}):

\textbf{Selection:} 
At iteration $i$, a trajectory $\tau^i_{1,T} := \tau^i_{T} = (s^i_1,a^i_1, \dots, s^i_T,a^i_T)$ in the tree is selected using a \textit{selection policy} $\pi_{i}$.
Traditionally, MCTS uses deterministic selection policies such as $\pi^{\text{UCT}}_{i}$ or $\pi^{\text{PUCT}}_{i}$ \citep[][see Appendix \ref{app:puct_and_uct} for additional detail]{UCT,puct,AlphaGo}.
$\pi_{i}$ uses the current state-action evaluation $q_i$, the number of prior backpropagation steps $M_i(s_t)$ through each node $s_t$ (and $M_i(s_t,a) $ which can be computed using $M_i(s_{t+1})$ of the node associated with the transition $(s_t,a)$) and often a \textit{prior policy} $\pi_\theta$.
It has been shown that $\pi^{UCT}_{i},\pi^{PUCT}_{i}$ \textit{track} the policy improvement of specific improvement operators $\I(\pi_\theta, q_i)$ \citep{grill2020monte}.
Further, that using $\I(\pi_\theta, q_{i})(s)$ directly rather than $\pi^{PUCT}_{i}(s)$ results in better performance, for example with $\I$ the \textit{regularized policy improvement} operator $\I_{\text{reg}}$ \citep{gumbel_mz}:
\begin{align}
    \label{eq:pi_i}
    \pi_i(a|s_t) &= \I_{\text{reg}}(\pi_\theta,q_i)(a|s) 
    \propto \exp(\log \pi_\theta(a|s_t) + \beta_{i}(s_t) q_{i}(s_t,a)), 
    \\
    \beta_i(s_t) &= (c_{\text{visit}} + \max_{a \in \A} M_i(s_t, a))c_{\text{scale}}
\end{align}
To compute $\pi_{i} $, evaluation $q_{i}(s_t,a)$ is needed for \textit{all} actions, including those that have not yet been searched.
To achieve this, the \textit{completed $Q$ values} mechanism \citep{gumbel_mz} is used, which replaces $q_i(s_t,a) := v_{\text{mix}}(s) \approx V^{\pi_\theta}(s_t) $, when 
$M(s_t,a) = 0$ while preserving policy improvement.
$v_{\text{mix}}(s) = \sum_{a\in \{ a: M(s_t,a) > 0\}}\pi_\theta(a|s_t)q_i(s,a) + \sum_{a \in \{ a: M(s_t,a) = 0\} }\pi_\theta(a|s_t)v_\phi(s). $

\textbf{Expansion:} 
When selection arrives at an unobserved transition $(s_{T},a_{T})$ (a \textit{leaf}), it is \textit{expanded}. 
This gathers information from the model: $r_T = \E[R(s_{T},a_{T})], s_{T+1} \sim P(s_{T},a_{T}), v_{i+1}(s_{T+1}) \approx V^{\pi_\theta}(s_{T+1}) $.
In classical MCTS, the new node is evaluated using the average returns of $K$ rollouts in the model,
sampled i.i.d. from state $s_{T+1}$ 
using a \textit{rollout policy}.
Following AlphaGo \citep{AlphaGo}, modern variants use a DNN $v_{i+1}(s_{T+1}) := v_\phi(s_{T+1})$ instead.

\textbf{Backpropagation:} The bootstrapped return of the selection trajectory $\nu_i(s_t) = \sum_{j = 0}^{T-t}\gamma^j r_j + \gamma^{T-t+1} v_{i+1}(s_{T+1})$ is propagated backwards through all nodes $s_t$ along the trajectory. 
The nodes maintain the reward to the node $r(s_{t-1}, a_{t-1})$ and an estimate of the average value across $\pi_1, \dots, \pi_i$:
\begin{align}
    v_{i+1}(s_t) = {\textstyle\frac{1}{M_i(s_t)} \sum_{j=1}^{M_i(s_t)}} \nu_j(s_t), 
    \,\,
    \,\,
    q_{i+1}(s_{t},a_{t}) = r(s_{t}, a_{t}) + \gamma v_{i+1}(s_{t+1}).
\end{align}
MCTS returns $\pi_{\text{search}}(s) = \I_{\text{reg}}(\pi_\theta, q_M)(s)$ and 
$ V_{\text{search}}(s) = \sum_{a \in A}\pi_{\text{search}}(a|s)q_M(s,a) $.
To act in the environment, an action can be sampled from $ \pi_{\text{search}}(s)$ (see appendix \ref{app:action_selection_and_improvement_survey} for more detail).

\subsection{Sequential Monte Carlo for search for policy improvement in RL}
\label{background:smc}
SMC methods use \textit{sequential importance sampling} (SIS) to approximate a sequence of \textit{target distributions} using \textit{proposal distributions} \citep{chopin2020introduction}.
In RL, SMC methods have been adapted for search for policy improvement \citep{piche2019} by formulating the proposal and target as distributions over trajectories~$\tau_t $ using $\pi_\theta$ as the \textit{proposal policy} and $\pi'(s_t) = \I(\pi_\theta, Q^{\pi_\theta})(s_t)$ as the \textit{target policy} \citep{tsmcts}.
At each time step $t \in \{1, \dots, T\}$, $N$ particles comprised of a trajectory and weights $\{\tau_t^n, w_t^n\}_{n=1}^N$ are updated via \textit{sampling} and \textit{correction}.
\textit{Sampling}: each trajectory $\tau^n_{t-1}$ is extended by sampling
$
    s_{t+1}^n \sim P(\cdot | s_{t}^n, a_{t}^n), 
    a_t^n \sim \pi_\theta(\cdot | s_t^n),
    s_1^n = s,
$
where $s$ is the current state in the environment.
\textit{Correction}: The weights are corrected using SIS (left) such that the weighted particles $\{\tau_t^n, w_t^n\}_{n=1}^N$ provide consistent approximation of expectations under the target:
\begin{align}
    w_t^n 
    = 
    w^n_{t-1}\frac{\pi'(a^n_t|s^n_t)}{\pi_\theta(a^n_t|s^n_t)}, 
    \quad
    \frac{1}{\sum_{n=1}^N w_t^n}\sum_{n=1}^N w_t^n \left(\mathcal{R}(\tau^n_t) + \gamma^t V^{\pi_\theta}(s^n_{t+1})\right)
    \xrightarrow[N \to \infty]{\mathbb{P}}
    V^{\pi'}(s_1).
    \label{eq:RL_SMC_IS_Ws}
\end{align}
We refer to \cite{tsmcts} for derivation and Appendix \ref{app:smc_more_detail} and \cite{chopin2020introduction} for more detail.

%% file: sections/particle_mcts.tex
\section{Particle Monte Carlo Tree Search}
We design Particle Monte Carlo Tree Search (PMCTS) to achieve three desiderata:
(I) Theoretically motivated selection and evaluation, to enable principled policy improvement.
(II) Selection of multiple unique trajectories fully independently in parallel, to support batch-parallelization.
(III) Correctly account for the presence of remaining duplicate trajectories, to provide principled updates to the tree.
We begin by designing a simple, principled, particle-based batch-parallel variant of MCTS, which preserves policy improvement under \textit{classical} assumptions (namely, unbiased and uncorrelated evaluations).
Next, we identify the challenges that arise from using DNN evaluations which do not satisfy these classical assumptions.
We then proceed to present PMCTS, which addresses these challenges and achieves the set desiderata.

\subsection{A simple, parallel, particle based Monte Carlo Tree Search algorithm}
We begin with an observation: the principled stochastic selection policies proposed by \cite{grill2020monte}, such as $\pi_i$ (Equation \ref{eq:pi_i}),
enable the principled selection of multiple unique trajectories fully independently in parallel through i.i.d. sampling from $\pi_i$.
Motivated by this observation, we design a simple particle-based MCTS algorithm which preserves the policy improvement properties of MCTS and reduces evaluation variance.
The algorithm operates the same as baseline MCTS, except that each step is now batch-parallelized over particles.

\textbf{Particle-based selection:}
At each iteration $i$, $N$ trajectories 
are sampled independently in parallel from the stochastic selection policy $\pi_i$ (Equation \ref{eq:pi_i}).

\textbf{Particle-based expansion:}
$N$ nodes $s^n_{T+1}$ are expanded in batch-parallel:
\begin{align}
    \big\{
    s^n_{T+1} \sim P(\cdot| s^n_{T}, a^n_T), \quad
    r(s^n_{T}, a^n_T) = \E[R(s^n_{T}, a^n_T)], \quad
    v_i(s^n_{T+1}), \quad
    \pi_\theta(s^n_{T+1})
    \big\}_{n=1}^N.
\end{align}

\textbf{Particle-based backpropagation:}
We now need to compute the particle-based return $\nu_i$ and backpropagation $v_i$.
Under classical assumptions, the leaf evaluations $v_i(s^n_{T+1})$ are unbiased and independently sampled and the average of the returns of the particles that passed through $s_t$ (Equation \ref{eq:pmcts_backup} left) provides an unbiased estimate of $V^{\pi_i}(s_t)$.
To compute the backup $v_{i}(s_t)$ we use $1 / N(s_t)$ in the optimal estimator of the mean (the \textit{inverse-variance weighting} \citep{hartung2011statistical}, Equation \ref{eq:pmcts_backup} right, see Appendix \ref{app:in_place_w_a_derivation} for derivation).
$N(s_t)$ is the number of particles passing through the node this iteration and is inversely proportional to the variance $\mathbb V[\nu_i(s_t)]$:
\begin{align}
    \nu_i(s_t) = 
    \frac{1}{N(s_t)}
    \sum_{n=1}^N 
    \mathbbm 1_{s_t \in \tau^n_T} 
    \nu^n(s_t),
    \quad
    v_{i+1}(s_t) = v_{i}(s_t) + \frac{(\nu_i(s_t) - v_{i}(s_t)) N(s_t)}{M_{i+1}(s_t)},
    \label{eq:pmcts_backup}
\end{align}
and $M_{i+1}(s_t) = M_{i}(s_t) + N(s_t)$.
We summarize Simple PMCTS in Algorithm \ref{alg:simple_pmcts} in Appendix \ref{app:pseudocode}.
Action selection and policy improvement follow the description in Section \ref{sec:background}, see Appendix \ref{app:action_selection_and_improvement_survey} for more detail.
Simple PMCTS is principled and batch-parallelizable, addressing the first and second limitations of prior methods, but does not yet account for the combined effect of duplicate trajectories and DNN evaluations.

\subsection{Enter Deep Neural Networks}
\label{sec:weighted_particle_mcts}
With DNN evaluations, 
the returns $\nu_i^n(s_t)$ cannot be assumed to be entirely uncorrelated or unbiased.
This is a general problem in search algorithms such as MCTS, SMC for search \citep{piche2019,tsmcts} and MPPI \citep{tdmpc2}.
Despite this, these algorithms are extremely successful in practice, implicitly assuming that \textit{unique} evaluations are \textit{sufficiently} uncorrelated and unbiased.
In discrete-action, parallel search algorithms this problem is significantly exacerbated.
\textit{Multiple} parallel selection trajectories can arrive at \textit{the same} leaf $s_{T+1}$, even when $\pi_i $ is stochastic.
In this case, all trajectories where $s^n_{T+1} = s_{T+1}$ will have \textit{fully} correlated returns.
This causes two main problems:
(I) Duplicate leaf evaluations do not reduce the evaluation's variance and waste compute.
(II) The variance $\mathbb V[\nu_i(s_t)]$ cannot be assumed to be proportional to $1/N(s_t)$ resulting in \textit{incorrectly weighted backpropagation}. 
We proceed to extend Simple PMCTS with mechanisms that address problems (I-II) and design the complete PMCTS algorithm.

\subsection{Weighted Particle Monte Carlo Tree Search (PMCTS)}
Addressing problems (I-II) requires the algorithm to reduce the number of duplicate trajectories while accounting for the remaining duplicate trajectories and maintaining a principled estimate of $V^{\pi_i}$.
To achieve that, we use SMC within PMCTS, with four mechanisms: \textit{higher-entropy sampling} to increase the number of unique trajectories;
\textit{Deduplication} to mitigate the effect of remaining duplicate trajectories and maintain a principled estimate of $V^{\pi_i}$;
And the \textit{Effective Sample Size (ESS)} \citep[][]{chopin2020introduction} for principled backpropagation.
In addition, SMC allows the algorithm to address a more general problem to MCTS, over-commitment to catastrophic returns, through a fourth mechanism we call \textit{retrospective reweighting}.
The resulting algorithm is PMCTS, which is described below. PMCTS maintains the same structure of particle-based selection, expansion, backpropagation, except that each particle $n$ is now associated with a weight $w_t^n$ which is updated during selection.

\textbf{Weighted particle-based selection with \textit{higher-entropy sampling}:} To increase trajectory diversity actions are sampled from a higher-entropy proposal policy $\hat \pi_i$ rather than $\pi_i.$ 
The particle's weight is computed using SIS to maintain a principled estimate of the expectation with respect to $\pi_i$:
\begin{align}
    \label{eq:weight_update}
    w^n_t 
    = 
    w^n_{t-1} \frac{\pi_{i}(a^n_t \mid s^n_t)}{\hat \pi_i(a^n_t \mid s^n_t)}, 
    \quad 
    \quad 
    \hat \pi_i(a_t|s_t) \propto 
    \pi_i(a_t|s_t)^{\frac{1}{\eta(s_t)}}.
\end{align}
This mechanism allows the algorithm to 
minimize the variance of the update by trading off between two sources of variance: the effect of $\hat \pi_i $ diverging from $\pi_i$ and the effect of reduced duplicates.

For simplicity, in this work we use a constant rather than a dynamic $\eta(s) := \eta $.
In general, any policy~$\hat \pi_i$ that is absolutely continuous with respect to $\pi_i$ can be used. 
If particles are allowed to communicate at the cost of parallelization the temperature can be chosen based on the number of particles per node to reduce variance.
SMC resampling techniques can additionally be incorporated for further variance reduction~\citep{chopin2020introduction}.
However, resampling reduces trajectory diversity, requires particles to communicate and can insert bias, and we leave its investigation for future work.

\textbf{Weighted particle-based backpropagation with \textit{deduplication} and \textit{ESS}:}
After expansion, PMCTS aggregates all duplicate trajectories 
into a single particle while preserving the correct total weight: $w^m_T(s_{T+1}) = \sum_{n=1}^N w^n_T  \mathbbm{1}_{s_{T+1} = s^n_{T+1}} $, which we call \textit{deduplication}.
This results in a new set of particles where all trajectories are unique while correctly preserving the estimate of the expectation with respect to $\pi_i$.
We will refer to this new particle set using the same notation $\{\tau^n_T, w^n_T\}_{n=1}^N$. 
The backup $\nu_i(s_t)$ is computed using the particles' normalized weights $\bar w^n_T(s_t)$ 
and returns $\nu^n(s_t)$:
\begin{align}
    \nu_i(s_t) = 
   \sum_{n=1}^N \underbrace{
    \bar w^n_T(s_t) \nu^n(s_t) \mathbbm 1_{s_t \in \tau^n_T}
   }_{\text{Weighted particle returns}},
    \quad\quad
    \bar w_T^n (s_t)
    = \frac{
        w_T^n \mathbbm 1_{s_t \in \tau^n_T}
        }{
        \underbrace{{\textstyle\sum_{m=1}^N} w^m_T\mathbbm 1_{s_t \in \tau^m_T}}_{\text{Normalized per node}}
        }.
    \label{eq:pmcts_normalized_estimator}
\end{align}
To accurately estimate the update to the average $v_{i+1}$ we need to estimate the variance $\mathbb{V}[\nu_i(s_t)].$
A standard choice is the 
inverse ESS \citep[][]{chopin2020introduction} $ \V\,[\nu_i(s_t)] \approx \frac{1}{\text{ESS}(s_t)} = \sum_{n=1}^N (\bar w^n_T(s_t))^2. $ Thus:
\begin{align}
    v_{i+1}(s_t) = v_{i}(s_t) + \frac{(\nu_i(s_t) - v_{i}(s_t)) \text{ESS}(s_t)}{M_{i+1}(s_t)},
    \quad\quad
    M_{i+1}(s_t) = M_{i}(s_t) + \text{ESS}(s_t),
    \label{eq:wmcts_backup}
\end{align}
and $ M(s_{T+1}) = 1 $.
See Appendix \ref{app:derivation_wmcts_backup} for more detail and derivation.

\textbf{Retrospective reweighting:} 
SMC additionally enables PMCTS to address a third problem, general to MCTS: over-commitment to catastrophic returns \citep[see][]{wendelin_over_commitment_to_backprop}.
At parent $s_{T}$ MCTS samples $a_{T} \sim \pi_i(s_{T})$ leading to evaluation $q_i(s_{T},a_{T}) = r_{T} + \gamma v_\phi(s_{T+1}) $.
The action $a_T$ may be catastrophic in terms of its true value $Q^{\pi_\theta}(s_{T},a_T)$ (for example, falling off a cliff), which cannot be known in advance: 
the purpose of the expansion is to gather new information.
In standard MCTS variants, this catastrophic evaluation will then be backpropagated to the root, unnecessarily contaminating all nodes along the trajectory even if the catastrophic action is entirely avoidable.
To address this problem, after the expansion step and before backpropagation we first compute the ``retrospective''-target policy $ \pi_{i+1}(s_{T}) $ by computing $q_{i+1}(s_{T})$ and then use $\pi_{i+1}$ instead of $\pi_i$ to compute $w_T^n$:
\begin{align}\label{eq:pre_backup_weight_correction}
    w_{T}^n = w^n_{T-1} \frac{\pi_{i+1}(a^n_{T} \mid s^n_{T})}{\hat \pi_i(a^n_{T} \mid s^n_{T})}, 
    \quad
    \pi_{i+1}(a_T|s_T) \propto \pi_\theta(a_T|s_T) \exp \big(\underbrace{\beta_{i+1}(s_T) q_{i+1}(s_T,a_T)}_{\text{Post-expansion.}}\big).
\end{align}
The weight $w_T^n$ now accounts for the new evaluation $q_{i+1}$.
As a result, at any node $s_t$ with more than one particle $N_i(s_t) > 1$ the aggregation of catastrophic trajectories is reduced.

This completes the PMCTS algorithm, which remains principled and batch parallelizable while accounting for the effects of duplicate trajectories. We summarize PMCTS in natural language in Appendix \ref{app:pseudocode:pmcts_summary} for added clarity, along with pseudocode in Algorithm \ref{alg:pmcts}. 

%% file: sections/theoretical_analysis.tex
\section{Theoretical Analysis}
\label{sec:theoretical_analysis}
The main use case of MCTS is for policy improvement: either explicitly to produce an improved policy in an RL training loop, or implicitly to select better actions by sampling from an improved policy.
For this reason, our analysis focuses on establishing that PMCTS is a principled policy improvement operator.

\cite{grill2020monte} have shown that MCTS with $\pi^{\text{PUCT}}$ \textit{tracks} policy improvement and proposed to use a principled improved policy such as $\pi_i$ (Equation \ref{eq:pi_i}), the policy tracked by $\pi^{\text{PUCT}},$ as the search policy, showing that MCTS can be \textit{interpreted} as a policy improvement operator.
We extend this result, showing that the MCTS variant which uses $\pi_i$ \textit{guarantees} policy improvement at every iteration under classical assumptions and with respect to the expected-backpropagation. 
That is, when $\nu_i(s_t) \gets \E\big [\R(\tau_{t,T}) + V^{\pi_\theta}(s_{T+1})\big ]$.
The classical assumptions under which MCTS is traditionally analyzed are that of i.i.d., unbiased and finite-variance node evaluations at the leaves and a deterministic and accurate dynamics model.
\begin{theorem}[MCTS as a Principled Policy Improvement Operator]
    \label{thm:mcts_policy_improvement}
    (Informal) Under the classical assumptions, with $\pi_i$ (Equation \ref{eq:pi_i}) as the selection policy and with respect to the expected backpropagation, MCTS guarantees policy improvement at every iteration.
\end{theorem}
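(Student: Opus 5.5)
We prove the statement by induction over search iterations $i$. The target is the inequality $V^{\pi_{i+1}}(s) \geq V^{\pi_i}(s)$ at every node, with $\pi_1$ identified with (or improving on) $\pi_\theta$. Under the classical assumptions and the expected backpropagation, the first step is to identify what $q_i$ estimates. Since $\nu_i(s_t)$ is replaced by its expectation $\E\big[\R(\tau_{t,T}) + V^{\pi_\theta}(s_{T+1})\big]$ under the selection policy $\pi_i$, the running average $v_{i+1}(s_t)$ is exactly a mixture value: it is a convex combination, weighted by visit counts, of $V^{\pi_j}(s_t)$ for $j \le i$. Here each $\pi_j$ acts inside the current tree and $\pi_\theta$ takes over below the leaves. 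In particular $q_{i+1}(s_t,a) = r(s_t,a) + \gamma v_{i+1}(s_{t+1})$. For unvisited actions, the completed-$Q$ replacement $v_{\text{mix}}$ is an exact $\pi_\theta$-weighted quantity, so every $q_i$ is an exact action value of some well-defined (possibly mixed) policy. The first lemma is therefore: under the expected backpropagation, $q_i(s,a) = Q^{\mu_i}(s,a)$ for a behaviour policy $\mu_i$ that is the visit-weighted mixture of $\pi_1,\dots,\pi_i$ over the tree, completed by $\pi_\theta$ outside it.

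The second ingredient is the one-step improvement property of the regularized operator $\I_{\text{reg}}$. Because $\pi_{i+1}(s) \propto \pi_\theta(s)\exp(\beta q_{i+1}(s,\cdot))$ maximizes $\langle \pi, q_{i+1}\rangle - \beta^{-1}\mathrm{KL}(\pi\|\pi_\theta)$, and $\pi_\theta$ is feasible with zero KL, we get $\sum_a \pi_{i+1}(a|s) q_{i+1}(s,a) \ge \sum_a \pi_\theta(a|s) q_{i+1}(s,a)$. I will need the stronger comparison against the mixture $\mu_i$, i.e. $\sum_a \pi_{i+1}(a|s)Q^{\mu_i}(s,a) \ge V^{\mu_i}(s)$. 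To get it, I would use the fact that the exponential tilt puts more mass on higher-$q$ actions than any policy whose tilt of $\pi_\theta$ uses a smaller $\beta$. The $\beta_j$ are nondecreasing in $j$ since $\max_a M_i$ grows, and each $\pi_j$ is a tilt of $\pi_\theta$ by older $q_j$. The $v_{\text{mix}}$ completion exists precisely to preserve this inequality, and \cite{gumbel_mz} shows that it does; I would invoke that result here.

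With a one-step greedy improvement $\E_{\pi_{i+1}}[Q^{\mu_i}] \ge V^{\mu_i}$ at every tree node, the standard policy improvement theorem applies. Unrolling the Bellman equation down the finite tree to the leaves, where both policies bootstrap with $V^{\pi_\theta}$, gives $V^{\pi_{i+1}} \ge V^{\mu_i}$ everywhere. The inequality is strict wherever $q_{i+1}(s,\cdot)$ is non-constant on the support of $\pi_\theta$, which holds unless the policy is already greedy-optimal. Chaining with the induction hypothesis yields monotone improvement over $\pi_\theta$ at every iteration.

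The main obstacle is the second step, because $q_{i+1}$ evaluates the mixture of past selection policies rather than a single fixed policy. That makes the one-step improvement inequality against $\mu_i$, as opposed to against $\pi_\theta$, the delicate point. If the direct comparison fails, I would instead state the theorem relative to $\mu_i$: $V^{\pi_{i+1}} \ge V^{\mu_i} \ge V^{\pi_\theta}$. I would prove $V^{\mu_i} \ge V^{\pi_\theta}$ separately, inductively, since each mixture component is itself an improvement of $\pi_\theta$. That suffices for the informal claim.
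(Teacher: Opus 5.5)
The gap is in your second step, the one-step inequality $\sum_a \pi_{i+1}(a|s)\,Q^{\mu_i}(s,a) \ge V^{\mu_i}(s)$ against the mixture. The variational characterization of $\I_{\text{reg}}$ gives, for any $\mu$, only $\E_{\pi_{i+1}}[q_{i+1}] \ge \E_{\mu}[q_{i+1}] + \beta_{i+1}^{-1}\big(\mathrm{KL}(\pi_{i+1}\|\pi_\theta) - \mathrm{KL}(\mu\|\pi_\theta)\big)$. That is useful against $\pi_\theta$, which has zero KL, but says nothing against a $\mu_i$ lying farther from the prior. Your monotonicity-in-$\beta$ argument only compares tilts of the \emph{same} $q$. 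The earlier $\pi_j$ are tilts by different, older $q_j$, and then the comparison can fail. Take $\pi_\theta(\cdot|s) = (0.01, 0.01, 0.98)$:
\begin{itemize}
\item $q_j = (5,0,0)$ with $\beta_j = 1$ gives $\pi_j \approx (0.60, 0.004, 0.40)$;
\item $q_{i+1} = (1,1,0)$ with $\beta_{i+1} = 2 > \beta_j$ gives $\pi_{i+1} \approx (0.066, 0.066, 0.87)$;
\item hence $\E_{\pi_{i+1}}[q_{i+1}] \approx 0.13 < 0.60 \approx \E_{\pi_j}[q_{i+1}]$.
\end{itemize}
Nothing in expected-backpropagation MCTS stops advantage gaps from shrinking between iterations. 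Node values are re-averaged as the frontier moves, and siblings leave completed-$Q$ status when expanded. Appealing to \cite{gumbel_mz} does not help, because its guarantee for completed $Q$-values is improvement relative to the prior, not relative to earlier tilts. Your fallback does not escape the problem either: the chain $V^{\pi_{i+1}} \ge V^{\mu_i} \ge V^{\pi_\theta}$ still rests on $V^{\pi_{i+1}} \ge V^{\mu_i}$, which is exactly what step two was meant to supply. Separately, $v_{\text{mix}}(s)$ estimates the state value $V^{\pi_\theta}(s)$, not an action value of the unvisited $a$. So your first lemma should be restricted to visited actions.

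The repair is what the paper does: never compare $\pi_{i+1}$ with $\mu_i$, only with $\pi_\theta$. The paper's induction uses four facts:
\begin{itemize}
\item the expected backup equals $V^{\pi_i}(s_t)$;
\item $q_{i+1}(s_t,a) = Q^{\pi^i}(s_t,a)$ for the uniform mixture $\pi^i$ of $\pi_1,\dots,\pi_i$;
\item a mixture of policies that each improve on $\pi_\theta$ also improves on $\pi_\theta$ (Lemma 1 of \cite{tsmcts}), so $q_{i+1} \ge Q^{\pi_\theta}$;
\item applying $\I_{\text{reg}}(\pi_\theta,\cdot)$ to the $Q$-values of an improved policy again improves on $\pi_\theta$, a step the paper also takes from \cite{tsmcts} rather than deriving.
\end{itemize}
Your ingredient $V^{\mu_i} \ge V^{\pi_\theta}$, obtained by the mixture argument, is the right one. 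It should feed a comparison with $\pi_\theta$, not with $\mu_i$. You should also drop the monotone target $V^{\pi_{i+1}} \ge V^{\pi_i}$. The theorem only claims $V^{\pi_i}(s_1) \ge V^{\pi_\theta}(s_1)$. Moreover, even a valid $V^{\pi_{i+1}} \ge V^{\mu_i}$ would not yield the monotone claim, because $\mu_i$ averages in older, possibly weaker policies and can have $V^{\mu_i} < V^{\pi_i}$.
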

\textbf{Intuition:}
In expectation on each backpropagation step, $\nu_i$ is the true value of the selection policy, which is an improved policy with respect to $\pi_
\theta$.
The value at each node $q_i$ is the average across iterations and is thus also the value of a \textit{mixture} across the historical policies $ \pi_{1}(s), \dots, \pi_i(s)$.
A policy which is a mixture of improved policies is an improved policy, and policy improvement with respect to the value of an \textit{improved} policy maintains policy improvement \citep{tsmcts}.
We include the formal theorem and proof by induction over iterations in Appendix \ref{app:proof_mcts_pi}.

Analyzing MCTS with respect to the \textit{expected backpropagation} is necessary because in every iteration MCTS conducts policy improvement (i.e. maximizes the policy) with respect to \textit{observed} returns rather than the true value and thus suffers from over-estimation bias.
As a result, policy improvement is only guaranteed with respect to the expectation on the backpropagation step itself, rather than on the entire search.
The notion of the expected backpropagation is somewhat artificial in standard MCTS:
it corresponds to an expansion step which expands \textit{all} leaves in the tree at \textit{every} iteration with accurate evaluations, and correctly weighting them under $\pi_i,$ which is not how the algorithm works in practice.
In Simple PMCTS and PMCTS however, this notion is natural, as it directly corresponds to the limit of $N \to \infty.$
In fact, baseline MCTS can be interpreted as single-particle Simple PMCTS.
This allows us to establish a stronger result, that Simple PMCTS and PMCTS are both principled policy improvement operators under the more natural notion of $N \to \infty.$ 
\begin{theorem}[Simple PMCTS is a Principled Policy Improvement Operator]
    \label{thm:simple_pmcts_policy_improvement}
    (Informal) Under the classical assumptions and in the limit of $N \to \infty$, Simple PMCTS guarantees policy improvement at every iteration almost-surely.
\end{theorem}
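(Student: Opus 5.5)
The plan is to reduce Theorem~\ref{thm:simple_pmcts_policy_improvement} to Theorem~\ref{thm:mcts_policy_improvement}. We show that as $N \to \infty$ each Simple PMCTS backpropagation converges almost surely to the expected backpropagation, then rerun the induction of Appendix~\ref{app:proof_mcts_pi} on the resulting limiting quantities. Concretely, fix an iteration $i$ and condition on the tree statistics $(q_i, M_i)$ inherited from the previous iterations. Then the $N$ trajectories are i.i.d.\ draws from the selection policy $\pi_i$ of Equation~\ref{eq:pi_i}. Under the classical assumptions, the leaf evaluations are unbiased, independent across particles, and have finite variance. So for each node $s_t$ the particle returns $\nu^n(s_t)$ of particles passing through $s_t$ are i.i.d.\ with mean $\E[\R(\tau_{t,T}) + V^{\pi_\theta}(s_{T+1})]$ under $\pi_i$ restricted to the subtree of $s_t$.

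The first step handles the counts. For every node $s_t$ with positive reach probability $p_i(s_t) > 0$ under $\pi_i$, the strong law of large numbers gives $N(s_t)/N \to p_i(s_t)$ almost surely, so in particular $N(s_t) \to \infty$. Such nodes exist because the softmax form of $\pi_i$ puts positive mass on every action with $\pi_\theta(a|s) > 0$. Since the tree is finite, we can take a union over nodes.

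The second step handles the returns. Write $\nu_i(s_t)$ in Equation~\ref{eq:pmcts_backup} as a ratio of two empirical averages, $\frac{1}{N}\sum_n \mathbbm 1_{s_t\in\tau^n_T}\nu^n(s_t)$ over $\frac{1}{N}\sum_n \mathbbm 1_{s_t\in\tau^n_T}$. Applying the SLLN to numerator and denominator and then the continuous mapping theorem gives $\nu_i(s_t) \to$ the expected backpropagation value almost surely. This holds simultaneously at all nodes, since there are finitely many.

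The third step is the update itself. The rule $v_{i+1} = v_i + (\nu_i - v_i)N(s_t)/M_{i+1}(s_t)$ is just the running average over the historical per-particle returns. In the limit, $q_{i+1}$ becomes a visitation-weighted convex combination of values of the policies $\pi_1,\dots,\pi_i$, that is, the value of a mixture of those policies. That is exactly the object the induction of Theorem~\ref{thm:mcts_policy_improvement} uses. Invoking that proof (mixtures of improved policies are improved, and improvement with respect to the value of an improved policy preserves improvement, following \cite{tsmcts}) then yields $V^{\pi_{i+1}} \ge V^{\pi_\theta}$ at every iteration, almost surely.

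The main obstacle is that the mixture weights $N(s_t)/M_{i+1}(s_t)$ are themselves random and depend on $N$. The induction hypothesis is also stated about limit objects, so we must pass the limit through the nonlinear softmax in $\pi_{i+1}$, including $\beta_{i+1}$, which depends on $\max_a M_{i+1}(s_t,a)$, and this grows with $N$.

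I would first handle this by proving by induction over $i$ that, after normalizing by $N$, all tree statistics $(q_i, M_i/N)$ converge almost surely to deterministic limits. Continuity of the softmax in $q$ then transfers almost-sure convergence to $\pi_{i+1}$. The growth of $\beta$ is no problem for the argument, because policy improvement holds for any $\beta \ge 0$: the regularized improvement is monotone in $q$ for every temperature. Nodes with zero reach probability keep the completed-$Q$ value $v_{\text{mix}}$, which preserves improvement as in \cite{gumbel_mz}.
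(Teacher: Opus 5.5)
Your first three steps are the paper's proof. Conditional on the history, the iteration-$i$ particles are i.i.d.\ from a fixed, full-support $\pi_i$, so every reachable node gets $N(s_t)\to\infty$. The SLLN then gives $\nu_i(s_t)\to V^{\pi_i}(s_t)$ a.s.; your ratio-of-averages form is equivalent to the paper's. Intersecting countably many probability-one events and rerunning the induction of Theorem~\ref{thm:mcts_policy_improvement_formal} finishes.

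The gap is in the joint-limit induction you add to resolve your ``main obstacle'', specifically the claim that continuity of the softmax transfers a.s.\ convergence of $q_{i+1}$ to $\pi_{i+1}$. In Simple PMCTS the counts are particle counts. So $\beta_{i+1}(s_t)=c_{\text{scale}}(c_{\text{visit}}+\max_a M_{i+1}(s_t,a))$ grows linearly in $N$ already at $i=1$: at the root, $\max_a M_2(s_1,a)\approx N\max_a\pi_\theta(a|s_1)$. The maps $q\mapsto\pi_\theta e^{\beta q}/Z$ are not equicontinuous as $\beta\to\infty$, so $q_{i+1}\to\bar q_{i+1}$ does not give convergence of $\pi_{i+1}$.
\begin{itemize}
\item Where $\arg\max_a\bar q_{i+1}(s_t,a)$ is unique, $\pi_{i+1}(s_t)$ collapses to a point mass.
\item Where there is a tie, $\beta_{i+1}\,(q_{i+1}(s_t,a)-q_{i+1}(s_t,b))$ is of order $N\cdot N^{-1/2}$ by the CLT, generically. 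Then $\pi_{i+1}(s_t)$ has no limit and $M_{i+2}/N$ has no deterministic limit.
\item Even in the unique-argmax case the limit has lost full support. So the premise of your first step at iteration $i+1$ (namely $p_{i+1}(s_t)>0$, hence $N(s_t)\to\infty$) fails at every node off the greedy path. There the expected particle count is about $N e^{-\Theta(N)}\to 0$.
\end{itemize}
Your remark that improvement holds for every $\beta\ge 0$ protects the inequality, not the convergence your induction needs. In the joint limit, Simple PMCTS does not coincide with the expected-backpropagation MCTS of Theorem~\ref{thm:mcts_policy_improvement}, whose $\beta_i$ stays bounded because counts grow by one per iteration. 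It coincides instead with a greedy-selection variant of it.

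There are two ways to repair this.
\begin{itemize}
\item Adopt the paper's device: keep the history, hence $\pi_i$ with its finite $\beta_i$, fixed while only the iteration-$i$ particle count goes to infinity. Then no limit ever passes through the softmax.
\item If you want a genuine joint limit, give up convergence of $\pi_{i+1}$. Argue that nodes which stop receiving particles keep values that are already values of mixtures of earlier improved policies. Argue that greedy ($\beta=\infty$) selection with respect to such values is still an improvement in the sense used by the induction. Finally, since $V^{\pi_i}(s_1)\ge V^{\pi_\theta}(s_1)$ is a closed condition, check it along every limit point.
\end{itemize}
The joint limit also invalidates your plain SLLN step. Because $\pi_i$ then depends on $N$, the iteration-$i$ samples form a triangular array. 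That needs a coupling across $N$ or stronger moment bounds: Hoeffding plus Borel--Cantelli handles the bounded visit indicators, but not automatically returns with only finite variance.
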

\textbf{Intuition:}
The backpropagation estimates of Simple PMCTS converge \textit{almost-surely} (i.e. $ 
\text{P}\left(\lim_{N\to\infty}V_N(s)  = V^{\pi_i}(s)\right) = 1
$) to the expected backpropagation values as $N \to \infty$, and thus the same proof as applies almost-surely.
See Appendix \ref{app:proof_simple_pmcts_policy_improvement} for the formal Theorem and its proof.

\begin{theorem}[PMCTS is a Principled Policy Improvement Operator]
    \label{thm:pmcts_policy_improvement}
    (Informal) Under the classical assumptions and in the limit of $N \to \infty$, PMCTS guarantees policy improvement at every iteration almost-surely.
\end{theorem}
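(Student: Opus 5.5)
The plan is to reduce Theorem~\ref{thm:pmcts_policy_improvement} to Theorem~\ref{thm:mcts_policy_improvement} in the same way Theorem~\ref{thm:simple_pmcts_policy_improvement} is obtained. We show that, as $N \to \infty$, every PMCTS backup $v_{i+1}(s_t)$ converges almost surely to the corresponding expected-backpropagation backup. That backup is the $q$-weighted average of the values $V^{\pi_j}(s_t)$ of the historical target policies. Once this holds at every node and iteration, the inductive argument behind Theorem~\ref{thm:mcts_policy_improvement} applies verbatim on the probability-one event. The argument has three steps: (i) the per-iteration estimator $\nu_i(s_t)$ converges; (ii) deduplication and retrospective reweighting do not change the limit; (iii) ESS-weighted averaging still produces a mixture of improved policies.

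\textbf{Step (i): consistency of the per-node estimator.} Fix iteration $i$ and condition on the tree. Particles are i.i.d.\ trajectories from the proposal $\hat\pi_i$. Since $\hat\pi_i \propto \pi_i^{1/\eta}$ with $\pi_i$ full-support (a softmax of $\log\pi_\theta$ plus bounded $q$), absolute continuity holds. The unnormalized weights are products of bounded ratios, because the action space and horizon are finite. For a node $s_t$, write $\nu_i(s_t)$ in Equation~\ref{eq:pmcts_normalized_estimator} as a ratio of two empirical means: $\frac1N\sum_n w^n_T \mathbbm 1_{s_t\in\tau^n_T}\nu^n(s_t)$ over $\frac1N\sum_n w^n_T\mathbbm 1_{s_t\in\tau^n_T}$. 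By the strong law of large numbers, the numerator converges a.s.\ to $P_{\pi_i}(s_t\in\tau)\,\E_{\pi_i}[\nu\mid s_t]$ times the proposal normalizer, and the denominator converges to the same factor without the conditional expectation. The ratio therefore tends to $\E_{\pi_i}[\R(\tau_{t,T})+V^{\pi_\theta}(s_{T+1})\mid s_t] = V^{\pi_i}(s_t)$. This requires $P_{\hat\pi_i}(s_t\in\tau)>0$, which holds for every in-tree node by full support. Bounded rewards and the unbiased, finite-variance leaf evaluations of the classical assumptions supply the needed integrability.

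\textbf{Step (ii): deduplication and retrospective reweighting.} Deduplication only merges particles with identical trajectories and sums their weights. Both the numerator and the denominator above are therefore unchanged, and deduplication is an exact algebraic identity. Retrospective reweighting replaces $\pi_i(a_T\mid s_T)$ by $\pi_{i+1}(a_T\mid s_T)$ at the last step. In the limit $N\to\infty$ every leaf action at $s_T$ is expanded a.s.\ (finitely many actions, each with positive proposal probability). So $\pi_{i+1}(s_T)$ is a deterministic regularized improvement of $\pi_\theta$ with respect to a.s.-converged $q$-values, and the target becomes the mixed policy that follows $\pi_i$ above $s_T$ and $\pi_{i+1}$ at $s_T$. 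Because $\I_{\text{reg}}$ applied to an improved value still improves on $\pi_\theta$ (the \citep{tsmcts} lemma used in Theorem~\ref{thm:mcts_policy_improvement}), this composite target is itself an improved policy. The limit is the value of an improved policy, which is all the induction needs.

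\textbf{Step (iii): ESS-weighted averaging; expected obstacle.} In Equation~\ref{eq:wmcts_backup}, $v_{M}(s_t)$ is a convex combination of the $\nu_j(s_t)$ with coefficients $\mathrm{ESS}_j(s_t)/\sum_k \mathrm{ESS}_k(s_t)$. These coefficients are nonnegative and sum to one, although they are random and depend on $N$. I would show that $\mathrm{ESS}_j(s_t)/N$ converges a.s.\ to a positive constant by applying the SLLN to $\sum w$ and $\sum w^2$. The mixture weights then converge to deterministic limits, so $v_{i+1}(s_t)$ converges to a fixed convex combination of values of improved policies. That combination is the value of a mixture policy, again improved, just as in Theorem~\ref{thm:mcts_policy_improvement}. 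The main obstacle is the interaction between steps: tree statistics at iteration $i$ are random functions of earlier particle sets, so the target $\pi_i$ itself depends on $N$. I would handle this by induction on $i$. Assume all $q_i$ converge a.s.; continuity of $\I_{\text{reg}}$ in $q$ then gives a.s.\ convergence of $\pi_i$ and $\hat\pi_i$. I would apply a conditional SLLN for triangular arrays, valid because the weights are uniformly bounded, and take a countable intersection of probability-one events over nodes and iterations. Strict improvement then follows exactly as in the base theorem.
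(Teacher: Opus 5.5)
Your route is the paper's: show that on a probability-one event each per-node self-normalized estimator converges to the value of the target policy (retrospectively modified, hence still improved). Then show that deduplication leaves $\nu_i(s_t)$ unchanged, that the ESS-weighted update is a convex combination of values of improved policies, and run the induction of Theorem~\ref{thm:mcts_policy_improvement_formal} on a countable intersection of such events. You are more careful than the paper in two places. Your ratio-of-means argument shows the cumulative weights $w^n_T$ already give the right conditional expectation at every node, because the prefix factor shared by all particles through $s_t$ cancels in the per-node normalization. You also notice that $\pi_i$ depends on $N$ through the tree.

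\textbf{The gap is in Step (iii).} $\mathrm{ESS}(s_t)$ is computed \emph{after} deduplication. Deduplication is an identity for $\nu_i(s_t)$ but not for $\sum_n(\bar w^n_T)^2$: merging $k$ copies of weight $w$ turns a contribution $k w^2$ into $k^2 w^2$. Conditional on the tree at iteration $i$, only finitely many distinct trajectories $\ell\in L(s_t)$ pass through $s_t$, because the tree is finite and every particle stops at its first unexpanded edge. Hence after merging $1\le \mathrm{ESS}(s_t)\le |L(s_t)|$ for every $N$. By the SLLN the merged normalized weights converge to the conditional target probabilities $\bar p_\ell(s_t)$, so $\mathrm{ESS}(s_t)\to 1/\sum_{\ell\in L(s_t)}\bar p_\ell(s_t)^2$, a finite deterministic constant, and $\mathrm{ESS}(s_t)/N\to 0$, not a positive constant. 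Applying the SLLN to $\sum_n w^n$ and $\sum_n (w^n)^2$ over the $N$ raw particles gives the ESS of the pre-merge particle set, which the algorithm never uses. The paper's own proof makes the mirror-image error of asserting $\mathrm{ESS}(s_t)\to\infty$.

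\textbf{The fix, and why it matters.} With the correct limit your conclusion survives, and more directly. Each $\mathrm{ESS}_j(s_t)$, and hence each effective count $M_j(s_t)$, converges a.s.\ to a finite deterministic value (inductively over iterations). So the convex coefficients in Equation~\ref{eq:wmcts_backup} converge to deterministic limits. The coefficient of the initial leaf value (from $M(s_{T+1})=1$) then no longer vanishes, so it must be carried along as a $\pi_\theta$-component of the mixture.

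Boundedness is also what your final induction needs. Effective counts enter $\beta_i(s_t)=(c_{\text{visit}}+\max_a M_i(s_t,a))c_{\text{scale}}$. If ESS grew linearly in $N$ as you claim, then $\beta_i\to\infty$ and $\pi_i,\hat\pi_i$ would collapse onto $\argmax_a q_i(s_t,a)$. Several of your steps would then fail:
\begin{itemize}
\item Proposal probabilities of non-greedy nodes would decay exponentially in $N$, so those nodes need not receive infinitely many particles, and not every action at $s_T$ would be expanded.
\item The retrospective ratios $\pi_{i+1}/\hat\pi_i$ would not be uniformly bounded.
\item Neither continuity of $\I_{\text{reg}}$ in $q$ nor your bounded triangular-array SLLN would apply.
\end{itemize}
With bounded ESS, $\beta_i$ converges, all relevant probabilities stay uniformly bounded away from zero, and your induction goes through.

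Finally, Theorem~\ref{thm:mcts_policy_improvement_formal} only establishes the weak inequality. Your closing claim that strict improvement ``follows exactly as in the base theorem'' is therefore not supported.
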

\textbf{Intuition:}
The SMC estimates used by PMCTS converge almost-surely to the expected backpropagation values as $N \to \infty$. Deduplication preserves the importance-weighted particle distribution, while the ESS-based weighting becomes exact in the limit. 
Thus, almost surely, the same proof as Theorem \ref{thm:mcts_policy_improvement} applies to PMCTS.
See Appendix \ref{app:proof_pmcts_policy_improvement} for the formal Theorem and its proof.

We additionally include a runtime complexity analysis in Appendix \ref{app:complexity_analysis}.

%% file: sections/related_work.tex
\section{Related Work}
\label{sec:related_work}

Runtime-inference scaling of MCTS has been a long time objective of AI research \citep{parallel_uct}. 
The canonical approaches are (I) \textit{root parallelization} (running different instances of MCTS in parallel) and (II) \textit{leaf parallelization} (scaling the evaluation of leaves with parallel compute).
Both rely on stochasticity in the selection and / or evaluation to scale and are thus unsuited for runtime inference scaling with modern DNN evaluations and deterministic dynamics and selection.
(III) \textit{Tree parallelization} (parallelizing the search itself), which canonically uses the \textit{virtual visits} heuristic to incentivize the selection of multiple unique trajectories and circumvent these challenges.
When a process arrives at action $a$ in node $s$ it increments a \textit{virtual visit} $M_i^{\text{virtual}}(s,a)$ \citep{virtual_loss}.
When another process arrives at $s$ it computes the deterministic selection policy $\pi^{\text{PUCT}}_i(s)$ using $M_i^{\text{virtual}}(s,a)$ and a heuristic which effectively \textit{predicts} the evaluation of the previous processes in the hope that $\pi^{\text{PUCT}}_i$ selects a different action.
The selected trajectories are then evaluated in parallel, usually batched, and backpropagated, whether they are unique or not, resulting in incorrectly weighted estimates of the evaluation's variance (when DNNs are used for evaluation) as well as an unclear interpretation of the policy being evaluated.
In addition, to work as intended this selection requires the processes to operate \textit{sequentially} at each \textit{shared} node, both preventing effective batched-parallelization as well as in the worst case resulting in entirely sequential--rather than parallel--selection 
(See Appendices \ref{app:more_related_work} and \ref{app:virtuals} for more detail).
In contrast, PMCTS is both principled (Theorem \ref{thm:pmcts_policy_improvement}) and selects trajectories fully independently in parallel, enabling fully-batched parallelization of the selection step.

Adjacent to MCTS and PMCTS, SMC has been used directly for search for policy improvement \citep{piche2019,trtpi,tsmcts}.
The resulting SMC-based search algorithms are fundamentally different to MCTS:
They maintain statistics only over $N$ particles rather than memorize the entire search tree. 
As a result, they can only converge to the value of an \textit{improved policy} in the limit of infinite search horizon \citep{tsmcts}, rather than the optimal policy in the case of MCTS \citep{UCT};
they also struggles to scale with large search horizons without large particle budgets \citep{tsmcts} as we also demonstrate in Appendix \ref{app:additional_results};
And they are not designed to account for duplicate trajectories.
In contrast, PMCTS memorizes the entire tree and performs well with large search budgets irrespective of the number of particles.
PMCTS is additionally designed to account for the presence of duplicate trajectories with the three mechanisms \textit{increased-entropy selection}, \textit{deduplication} and \textit{the ESS} which are novel to this line of work.


%% file: sections/experiments.tex
\section{Experiments}
\label{sec:experiments}
Our objective in this work is to present a novel and effective tree-parallel MCTS algorithm which is both principled as well as suited for batch-parallelization and runtime scaling on the GPU.
In this section, we evaluate PMCTS empirically across three popular MCTS and RL benchmark domains and one LLM inference scaling experiment. 
These encompass: (I) Zero-sum board games (\textit{chess, 19x19 Go, 9x9 Go} and \textit{Gardner chess)} \citep{koyamada2023pgx}; 
(II) Discrete-action popular~\citep{trtpi,tsmcts} single actor environments (\textit{Snake} and \textit{RubiksCube} \citep{bonnet2024jumanji}); 
(III) Classical continuous control environments (\textit{Ant, HalfCheetah} and \textit{Humanoid} \citep{brax}), where MCTS is extended to continuous actions in the manner of \cite{sampled_mz} (see Appendix \ref{app:continuous_action_mcts} for more detail);
And (IV) scaling the inference of an LLM solving different \textit{Sokoban} puzzles.
As performance metrics, we use 
the popular Bayes Elo~\citep{bayeselo} metric (measures strength of play in zero-sum games),
the observed averaged return and number of Sokoban instances solved, each where relevant.
To aggregate across different environments, we take the averaged Bayes-elo,  
average normalized return and
averaged number of instances solved, respectively.

To evaluate PMCTS as a tree-parallel MCTS algorithm, we begin by comparing the inference-time scaling of PMCTS with number of particles $N$ to the two standard and popular tree-parallel MCTS variants, Virtual Means and Virtual Losses.
In each environment, all agents use the same $ \pi_\theta $ and $v_{\phi}$ DNNs in the search and the default, popular MCTS hyperparameters \citep{gumbel_mz,mctx}.
%
%
%
%
The only new parameter introduced with this work is the temperature $\eta$, whose effect is investigated in Figure~\ref{fig:ablations}, based on which we choose $\eta = 1.5$ which is used in all experiments.
For additional experimental and implementation details see Appendices \ref{app:imp_details} and \ref{app:experimental_details}.
\begin{figure}[H]
    \vspace{-0.3cm}
    \centering
    \includegraphics[width=1\linewidth]{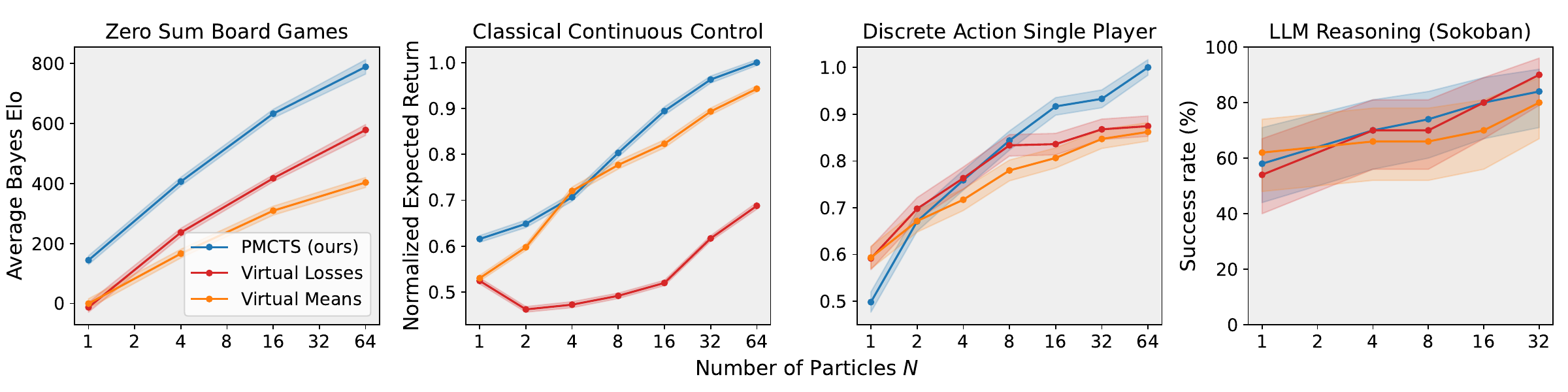}
    \vspace{-0.65cm}
    \caption{
    Scaling of tree-parallel MCTS variants with parallel compute (number of particles $N$) across different domains. 
    Bayes Elo with 95\% CI, in board games and $ 95\%$ Gaussian CI in discrete and continuous and 95\% Wilson CI in Sokoban.
    }
    \label{fig:scaling}
    \vspace{-0.35cm}
\end{figure}
PMCTS consistently scales with $N$ in all domains; 
Outperforms or performs comparably (no statistically significant difference) to both baselines, at every parallelization budget $N > 1,$ in every domain;
And is the only method that is consistently dominant across all four domains.
%
We additionally investigate the scaling of the different tree-parallel algorithms in an example environment across different parallel-simulation budgets $M,$
in Figure \ref{fig:scaling_and_sims_gardner} in Appendix \ref{app:additional_results}.
PMCTS scales well across increasing $M$  and outperforms both baselines at every budget pair $M$ and $N$.


To validate that PMCTS indeed provides effective runtime scaling with batch parallelization we investigate the wallclock runtime scaling of the different tree-parallel MCTS variants with vectorized, synchronously parallel implementations suited for batch-parallelization, in Figure~\ref{fig:runtime_scaling} left and center.
PMCTS's runtime scales very well with additional particles, as expected from an algorithm suited for batch parallelization.
In contrast, the synchronous implementations of Virtual Means and Virtual Losses do not scale well, as expected, due to their inability to select multiple unique trajectories fully independently in parallel and subsequent collapse to sequential rather than parallel selection (see Appendix \ref{app:sync_vs_async} for additional detail).

\begin{figure}[H]
    \vspace{-0.4cm}
    \centering
    \includegraphics[width=1\linewidth]{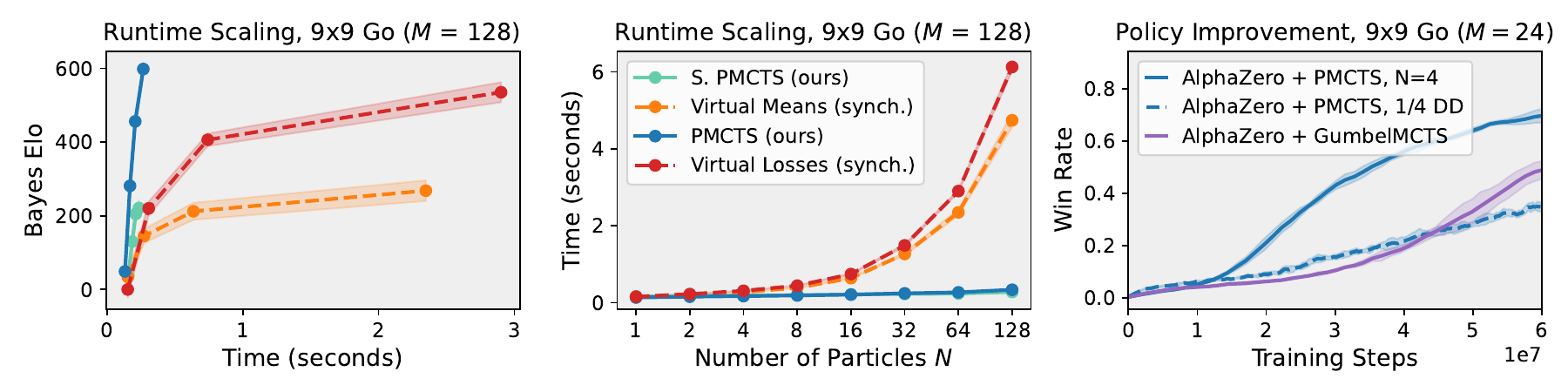}
    \vspace{-0.65cm}
    \caption{
    \textbf{Left: }Runtime scaling, Bayes Elo with 95\% CI, $N = (1,4,16,64)$ plotted.
    \textbf{Center: }Runtime scaling, 95\% confidence interval across repeated evaluations. 
    \textbf{Right:} Win rate vs. frames during training of AlphaZero with PMCTS and Gumbel MCTS, mean and $95\%$ CI across 3 seeds.
    }
    \label{fig:runtime_scaling}
    \vspace{-0.5cm}
\end{figure}

To investigate the benefits of PMCTS to an RL training loop, we investigate scaling policy improvement during training with PMCTS, in Figure \ref{fig:runtime_scaling} right.
We compare to GumbelMCTS \citep{gumbel_mz}, a modern non-parallel MCTS baseline.
PMCTS with $N=4$ outperforms baseline Gumbel MCTS in sample efficiency, demonstrating better policy improvement.
During training, one may trade off parallel compute for \textit{increased data diversity} (root-parallelization across different states) or for \textit{better policy targets} (tree-parallelization in the same state).
We include an example of this tradeoff with the dashed-line agent, which trades off increased $N = 4$ and reduced data generated during training by $1 / 4$.
All other training parameters, which were tuned for the baseline agent, are kept the same.
This agent performs similarly in sample efficiency to the baseline in this example.


To validate that each mechanism developed in Section \ref{sec:weighted_particle_mcts} indeed contributes to the overall performance of the algorithm, we ablate the different mechanisms  in sequence on 9x9 Go ($M=128$), in Figure \ref{fig:ablations} left.
A very large number of mechanism combinations is possible. 
To maintain an informative and reasonable number of experiments, we compare from Simple PMCTS all the way to full PMCTS, with the different mechanisms activated in order, resulting in 6 different agents, each having one more mechanism activated than the previous.
These are:
(I) \textbf{S}imple \textbf{PMCTS} (Algorithm \ref{alg:simple_pmcts}, shortened to \textbf{S}).
(II)~+~\textbf{D}eduplication which removes duplicate trajectories. 
Deduplication computes the weight of the update (Equation \ref{eq:pmcts_backup}, right) using the number of \textit{unique} trajectories instead of $N(s_t)$. 
(III)~+~\textbf{E}SS which replaces the weight of the update with a direct estimate of the evaluation's variance, 1 / ESS (Equation \ref{eq:wmcts_backup}).
(IV)~+~\textbf{T}emperature, which increases the number of unique trajectories by increasing the entropy of the selection policy using the temperature $\tau$ (Equation \ref{eq:weight_update}, right).
(V)~+~weight \textbf{C}orrection which uses SIS (Equation~\ref{eq:weight_update}, left) to estimate the expected return with respect to $\pi_i$ rather than $\hat \pi_i$.
(VI) The \textbf{PMCTS} algorithm, which uses retrospective reweighting in addition to all prior mechanisms.
As expected, performance increases consistently with every additional feature.
\begin{figure}[H]
    \vspace{-0.4cm}
    \centering
    \includegraphics[width=1\linewidth]{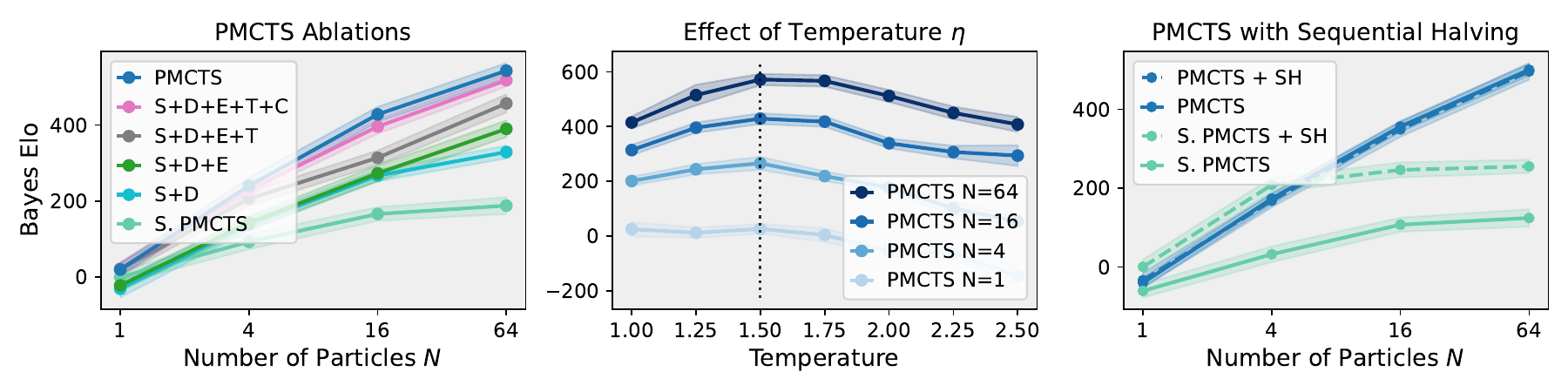}
    \vspace{-0.65cm}
    \caption{
    Ablations and hyperparameter evaluation on 9x9 Go ($M=128$). Bayes Elo with 95\% CI.
    }
    \label{fig:ablations}
    \vspace{-0.4cm}
\end{figure}
To investigate the effect of the introduced temperature hyperparameter $\eta$, we investigate the effect of different $\eta$ on the performance of PMCTS across different $N$ in Figure \ref{fig:ablations} center.
The dominating temperature is $1.5$, the temperature used in all other experiments.
Sequential Halving (SH, \citep{sh}) has been used by \cite{gumbel_mz,tsmcts} to improve search at the root.
We investigate its effect on PMCTS, in Figure \ref{fig:ablations} right.
SH, which enforces particle diversity at the root, has a strong positive effect on \textit{Simple} PMCTS. 
However, when combined with PMCTS, is has no additional effects on the scaling.
To reduce unnecessary complexity, we use PMCTS without SH in all experiments.

We include several additional results in Appendix \ref{app:additional_results}.
These include the individual results aggregated  in Figure \ref{fig:scaling}; 
An example comparison to \textit{root-parallelization}, to demonstrate that indeed root-parallelization is not effective for runtime scaling with DNN evaluations, as discussed in Sections \ref{sec:intro} and \ref{sec:related_work};
A comparison to the non-MCTS, SMC-based search variants, demonstrating their limitations compared to parallel-MCTS variants;
And an investigation of the manner in which the temperature $\eta$ allows PMCTS to minimize evaluation variance by trading off between increased number of unique trajectories and reduced divergence between the proposal $\hat \pi_i$ and target $\pi_i $.

%% file: sections/conclusions.tex
\section{Conclusions and Future Work}
\label{sec:conclusions}
In this work, we introduced \emph{Particle MCTS} (PMCTS), a principled, tree-parallel MCTS algorithm suited for runtime scaling through parallel compute, batch-parallelization and GPU acceleration.
PMCTS uses stochastic selection policies and Sequential Monte Carlo (SMC) to select and evaluate multiple unique trajectories independently in parallel while maintaining principled evaluation with respect to the target policy.
We established that MCTS and PMCTS are both principled policy improvement operators.
Empirically, PMCTS scales well with parallel compute and significantly outperforms prior, heuristic based approaches across a wide range of benchmark domains, including two-player zero-sum board games, single player discrete action and classical continuous control domains.
PMCTS opens up multiple exciting venues for future work 
such as more advanced temperature schedules for better proposal policies,
incorporating proposals that sample without replacement to increase the number of unique trajectories and 
resampling methods \citep{chopin2020introduction} for different variance tradeoffs.

%% file: appendix/symbols.tex
\section{Acronym and Symbols List}
\label{app:list_of_acronyms}
\begin{tabular}{r|l}
Variable & Meaning \\
\hline
$\gamma$                & discount \\
$\beta_i$               & Q-value scale at iteration $i$ (inverse-temperature) \\
$s_t$                   & state at step $t$ \\
$\tau^n_t$              & trajectory at step $t$ for particle $n$ \\
$T$                     & last step \\
$\pi_\theta(s_t)$       & prior policy network at state $s_t$ \\
$v_\phi(s_t)$           & value network prediction at state $s_t$ \\
$\pi'(a_t|s_t)$         & target improved-policy for action $a_t$ at state $s_t$ \\
$\pi_i(a_t|s_t)$        & selection policy at iteration $i$ for action $a_t$ at state $s_t$ \\ 
$\hat{\pi}_i(a_t|s_t)$  & sampling (e.g. \textit{proposal}) policy at state $s_t$ and iteration $i$ \\
$M_i(s_t)$              & visit count (or effective visit count in PMCTS) in the tree for state $s_t$ at iteration $i$
                        \\ 
$v_i(s_t)$              & value estimate in the tree for state $s_t$ at iteration $i$ \\
$v(s_{T+1})$              & value prediction at leaf $s_{T+1}$ in the tree during expansion \\
$q_i(s_t,a_t)$          & Q-value estimate in the tree for state $s_t$ and action $a_t$ at iteration $i$ \\
$N(s_t)$                & number of particles passed through state $s_t$ \\
$ESS(s_t)$              & effective sample size for particles that passed through state $s_t$ \\
$w^n_t$                 & weight of particle $n$ at step $t$ \\
$\bar w_T^n (s_t)$           & weight of particle $n$ at step $T$ normalized for state $s_t$ \\
$\nu^n(s_t)$            & value update at state $s_t$ by particle $n$ \\
$\nu(s_t)$              & value update at state $s_t$ across all particles \\
$\eta(s_t)$          & possibly state-dependent temperature \\
\end{tabular}

%% file: appendix/pseudocode.tex
\section{Pseudocode}
\label{app:pseudocode}
\subsection{Simple PMCTS}
\begin{algorithm}[H]
    \caption{Simple PMCTS}
    \label{alg:simple_pmcts}
    \begin{algorithmic}[1]
        \Require Number of particles $N$, Number of iterations $M$, Root state $s_1$, Environment model $\mathcal{M}$, DNNs $\pi_\theta, v_\phi.$
        \State Initialize a tree with capacity for $N M + 1$ nodes.
        \State Initialize root using predictions $\pi_\theta(s_1)$ and $v_\phi(s_1)$.
        
        \For{$i \gets 1$ to $M$}
            \State Sample $\{\tau^n_T = (s_1, a^n_1, ..., s^n_T, a^n_T)\}_{n=1}^N$ from $ \pi_i, \mathcal{M}$ independently in parallel. \Comment{Selection.}
            \State 
            Expand and add to the tree up to $N$ leaves in parallel: \Comment{Expansion.}
            \begin{align*}
                \big\{
                s^n_{T+1} \sim P(s^n_{T}, a^n_T), 
                \,
                r(s^n_{T}, a^n_T) = \E[R(s^n_{T}, a^n_T)], 
                \,
                v(s^n_{T+1}), 
                \,
                \pi_\theta(s^n_{T+1})
                \big\}_{n=1}^N.
            \end{align*}
            \State Compute the value update $\nu_i(s_t)$ (Equation \ref{eq:pmcts_backup}, left) $s_t \in \tau^n_T, \forall t, \forall n $: 
            \Comment{Backpropagation.}
            \begin{align*}
                \nu_i(s_t) = 
                \frac{1}{N(s_t)}
                \sum_{n=1}^N 
                \mathbbm 1_{s_t \in \tau^n_T} 
                \nu^n(s_t),
            \end{align*}
            \State Update the tree (Equation \ref{eq:pmcts_backup}, right) $s_t \in \tau^n_T, \forall t, \forall n $
            \begin{align*}
                v_{i+1}(s_t) = v_i(s_t) + \frac{(\nu_i(s_t) - v_i(s_t)) N(s_t)}{M_i(s_t) + N(s_t)}, \quad M_{i+1}(s_t) = M_i(s_t) + N(s_t).
            \end{align*}
        \EndFor
        \State \Return:  
        \Comment{See Appendix \ref{app:action_selection_and_improvement_survey}, Equation \ref{eq:actual_search_return_scores}}
            \begin{align*}
                \pi_{search}(a|s_1) &= \I(\pi_\theta, q_M), \quad V_{search}(s_1) = \sum_{a \in \A} \bar \pi(a|s_1)q_M(s_1,a), \quad a \sim \bar \pi(s_1). \\
                \bar \pi(a|s_1) &\propto 
                    \begin{cases}
                        \pi_{search}(a|s_1) \quad \text{if:} \quad  M(s_1,a) > 0 \\
                        0 \quad \quad \quad \quad \quad \quad  \text{if:} \quad  M(s_1,a) = 0
                    \end{cases}
            \end{align*}
    \end{algorithmic}
\end{algorithm}

\subsection{PMCTS}
\label{app:pseudocode:pmcts_summary}
For added clarity, we include a brief summary of PMCTS in natural language in addition and following the pseudocode.

\begin{algorithm}[H]
    \caption{PMCTS}\label{alg:pmcts}
    \begin{algorithmic}[1]
        \Require Number of particles $N$, Number of iterations $M$, Root state $s_1$, Environment model $\mathcal{M}$, DNNs $\pi_\theta, v_\phi$.        
        \State Initialize a tree with capacity for $N M + 1$ nodes.
        \State Initialize root using predictions $\pi_\theta(s_1)$ and $v_\phi(s_1)$.
        
        \For{$i \gets 1$ to $M$}
            \For{$t \gets 1$ to $T$ for $n \in N$ particles independently in parallel} \Comment{Weighted Selection}
                \State Sample and append to trajectory $\tau_{t-1}$: \Comment{\textit{SMC's Sampling}}
                \begin{align*}
                    a^n_t \sim \hat \pi(a^n_t|s^n_t), 
                    \quad 
                    s^n_{t+1} \sim P(\cdot|s^n_t, a^n_t), 
                \end{align*}
                \State Update particle weights (Equation \ref{eq:weight_update}): \Comment{\textit{SMC's Correction}}
                \begin{align*}
                    w^n_t 
                    = 
                    w^n_{t-1} \frac{\pi_{i}(a^n_t \mid s^n_t)}{\hat \pi_i(a^n_t \mid s^n_t)},
                \end{align*}
            \EndFor
            \State 
            Expand and add to the tree $N$ leaves in parallel: \Comment{Expansion.}
            \begin{align*}
                \big\{
                s^n_{T+1} \sim P(s^n_{T}, a^n_T), 
                \,
                r(s^n_{T}, a^n_T) = \E[R(s^n_{T}, a^n_T)], 
                \,
                v(s^n_{T+1}), 
                \,
                \pi_\theta(s^n_{T+1})
                \big\}_{n=1}^N.
            \end{align*}
            \State Compute the updated values and policy of the parents of the leaves:
            $$ \pi_{i+1}(a_T|s_T) \propto \pi_\theta(a_T|s_T) \exp(\beta_{i+1}(s_T) q_{i+1}(s^n_T,a^n_T)). $$
            \State Update weights $w^n_T$ with respect to $\pi_{i+1}(s_T)$ (Equation \ref{eq:pre_backup_weight_correction}): \Comment{Retrospective reweighting}
            \begin{align*}
                w_T^n \gets 
                w_T^n \frac{\pi_{i+1} (a^n_{T} \mid s^n_{T})}{\pi_i (a^n_{T} \mid s^n_{T})}
            \end{align*}
            \State Merge duplicates, preserving weight $w^n_T(s_{T+1}) = \sum_{m=1}^N w^m_T  \mathbbm{1}_{s_{T+1} = s^m_{T+1}} $ per unique trajectory and setting duplicates to zero.
            \State Normalize weights with respect to each other at all nodes $s_t$:
            \begin{align*}
                \bar w_T^n (s_t)
                = \frac{
                    w_T^n \mathbbm 1_{s_t \in \tau^n_T}
                    }{
                    {\textstyle\sum_{m=1}^N} w^m_T\mathbbm 1_{s_t \in \tau^m_T}
                    }.
            \end{align*}
            \State Compute $\nu_i(s_t)$ (Equation \ref{eq:pmcts_normalized_estimator}, left) and the ESS $s_t \in \tau^n_T, \forall t, \forall n $:\Comment{Weighted Backpropagation.}
            \begin{align*}
                \nu_i(s_t) = 
                   \sum_{n=1}^N \bar w^n_T(s_t) \mathbbm 1_{s_t \in \tau^n_T}\nu^n(s_t), 
                   \quad 
                   ESS(s_t) = \frac{1}{\sum_{n=1}^N (\bar w^n_T(s_t))^2}
            \end{align*}
            \State Update the tree (Equation \ref{eq:wmcts_backup}) $\forall s_t \in \tau^n_T, \forall t, \forall n $:
            \begin{align*}
                v_{i+1}(s_t) = v_i(s_t) + \frac{(\nu_i(s_t) - v_i(s_t)) \text{ESS}(s_t)}{M_i(s_t) + \text{ESS}(s_t)},
                \quad
                M_{i+1}(s_t) = M_i(s_t) + \text{ESS}(s_t).
            \end{align*}
        \EndFor
        \State \Return:  
        \Comment{See Appendix \ref{app:action_selection_and_improvement_survey}, Equation \ref{eq:actual_search_return_scores}}
            \begin{align*}
                \pi_{search}(a|s_1) &= \I(\pi_\theta, q_M), \quad V_{search}(s_1) = \sum_{a \in \A} \bar \pi(a|s_1)q_M(s_1,a), \quad a \sim \bar \pi(s_1). \\
                \bar \pi(a|s_1) &\propto 
                    \begin{cases}
                        \pi_{search}(a|s_1) \quad \text{if:} \quad  M(s_1,a) > 0 \\
                        0 \quad \quad \quad \quad \quad \quad  \text{if:} \quad  M(s_1,a) = 0
                    \end{cases}
            \end{align*}
    \end{algorithmic}
\end{algorithm}

\textbf{PMCTS summarized:} At each parallel MCTS iteration $i,$ $N$ particles sample trajectories independently in parallel from a higher-entropy stochastic-selection policy $\hat \pi_i$ (Equation \ref{eq:weight_update}, right).
At each step during selection, the particles' weight are updated with SIS, maintaining a principled estimate of the distribution over trajectories with respect to $\pi_i$ (Equation \ref{eq:weight_update}, left).
When all particles arrive at a leaf, $N$ the leaves are expanded in batch parallel, as in Simple PMCTS.
Retrospective reweighting is then applied, updating the particles' weights to estimate the distribution with respect to $\pi_{i+1}$ rather than $\pi_i$ (Equation \ref{eq:pre_backup_weight_correction}).
Deduplication is applied, removing duplicate trajectories while maintaining the correct importance weight per unique trajectory.
Backpropagation is applied, computing the returns $\nu_i$ and the ESS and using them to update $v_{i+1}$ (Equations \ref{eq:pmcts_normalized_estimator} and \ref{eq:wmcts_backup} respectively).
At this point, the iteration is complete, and the algorithm proceeds to the next iteration.

%% file: appendix/derivations.tex
\section{Derivations}
\label{app:derivations}

\subsection{Derivation of the numerically stable weighted average}
\label{app:in_place_w_a_derivation}
\begin{align}
    v_{i+1}(s_t) &= \frac{\nu(s_t) N_i(s_t) + v_i(s_t) M_i(s_t)}{M_i(s_t) + N(s_t)}\\
    &= \frac{\nu(s_t) N_i(s_t) + v_i(s_t) M_i(s_t) + v_i(s_t) N_i(s_t) - v_i(s_t) N_i(s_t)}{M_i(s_t) + N(s_t)}\\
    &= \frac{v_i(s_t) M_i(s_t) + v_i(s_t) N_i(s_t)}{M_i(s_t) + N_i(s_t)} + \frac{\nu(s_t) N_i(s_t) - v_i(s_t) N_i(s_t)}{M_i(s_t) + N_i(s_t)}\\
    &= v_i(s_t) + \frac{(\nu(s_t) - v_i(s_t)) N_i(s_t)}{M_i(s_t) + N_i(s_t)}.
\end{align}

\subsection{Derivation of the particle-based backpropagation step in PMCTS}
\label{app:derivation_wmcts_backup}
\subsubsection{Derivation of the variance estimator}
\label{app:derivation_of_pmcts_variance}
We analyze the variance of the PMCTS estimator and construct an estimator of it by considering the set of $N$ particles contributing to the update of node $s_t$. 
The estimator is defined as the weighted sum:
\begin{align}
    \mathbb{V}[\nu(s_t)] = \mathbb{V} \left[ \sum_{n=1}^N \bar{w}^n_t \nu^n(s_t) \right]
\end{align}
We assume that we have merged all duplicate particles into one particle with the total weight and all trajectories are unique.
While particles in SMC are inherently correlated due to their shared ancestry and the weight normalization constraint $\sum \bar{w}^n_t = 1$, a standard heuristic in the analysis of particle filters is to treat unique trajectories as approximately uncorrelated \citep{chopin2020introduction}. 
Under this assumption, and by treating the weights as approximately constant relative to the variance of the returns, we obtain:
\begin{align}
    \mathbb{V}[\nu(s_t)] \approx \sum_{n=1}^N \mathbb{V}[\bar{w}^n_t \nu^n(s_t)] \approx \sum_{n=1}^N (\bar{w}^n_t)^2 \mathbb{V}[\nu^n(s_t)]
\end{align}
This formulation mirrors the derivation of the Effective Sample Size (ESS) \citep{chopin2020introduction}, where the variance of a weighted importance sampling estimator is shown to scale with $\sum (\bar{w}^n_t)^2$. The ESS approximation is widely used as a proxy for estimator quality when the true covariance structure is intractable.

Assuming the variance of the return for any single trajectory is bounded by $\mathbb{V}[\nu^n(s_t)] \leq \sigma^2$, we arrive at the following practical bound for the estimator variance:
\begin{align}
    \mathbb{V}[\nu(s_t)] 
    \approx 
    \sum_{n=1}^N (\bar{w}^n_t)^2 \mathbb{V}[\nu^n(s_t)] 
    \leq 
    \sigma^2 \sum_{n=1}^N (\bar{w}^n_t)^2
    =
    \sigma^2\frac{1}{ESS(s_t)} 
\end{align}
This suggests that the variance of the node update is approximately proportional to the sum of the squared weights, $\sum_{n=1}^N (\bar{w}^n_t)^2$, allowing us to use this term as a scaling factor for uncertainty in the tree search.

\subsubsection{Derivation of the node value update}
\label{app:derivation_of_pmcts_value_update}
The weight of the update to the variance-minimizing unbiased estimator of the mean is the inverse variance.
Assuming that the variance of all trajectories is bounded by $\sum_{n=1}^N (\bar w^n_t)^2 \sigma^2$, we can divide all updates by $ \sigma^2$ to arrive at the variance bound being $\sum_{n=1}^N (\bar w^n_t)^2$.
The inverse variance is then $ \frac{1}{\sum_{n=1}^{N} (\bar w^n_t)^2} = ESS(s_t) $ (the \textit{effective sample size}, see \cite{chopin2020introduction}).

We can now derive the update:
\begin{align}
    v_{i+1}(s_t) &= \frac{\nu(s_t) \text{ESS}(s_t) + v_i(s_t) M_i(s_t)}{M_i(s_t) + \text{ESS}(s_t)}
    \\
    &= \frac{\nu(s_t) \text{ESS}(s_t) + v_i(s_t) M_i(s_t) + v_i(s_t) \text{ESS}(s_t) - v_i(s_t) \text{ESS}(s_t)}{M_i(s_t) + \text{ESS}(s_t)}
    \\
    &= \frac{v_i(s_t) M_i(s_t) + v_i(s_t) \text{ESS}(s_t)}{M_i(s_t) + \text{ESS}(s_t)} + \frac{\nu(s_t) \text{ESS}(s_t) - v_i(s_t) \text{ESS}(s_t)}{M_i(s_t) + \text{ESS}(s_t)}
    \\
    &= v_i(s_t) + \frac{(\nu(s_t) - v_i(s_t)) \text{ESS}(s_t)}{M_i(s_t) + \text{ESS}(s_t)}.
\end{align}

\subsection{Approximating expectations under the target in Sequential Monte Carlo}
\label{app:smc_more_detail}
There are multiple approaches to approximating expectations under the target using the particles $\{\tau_t^n, w_t^n\}_{n=1}^N$ of SMC, for some example function $\mathcal{R}$ (in RL terms, we can think of $\mathcal{R}(\tau_T^n)$ as the return along the trajectory).

The standard approach is to use the \textit{self-normalizing} estimator:
\begin{align}
    \frac{1}{\sum_{n=1}^N w_t^n}\sum_{n=1}^N w_t^n \mathcal{R}(\tau^n_t)
    \approx 
    \mathbb{E}_{\pi'}[R(\tau^n_t)] 
    =
    V^{\pi'}(s_1).
    \label{eq:biased_estimator}
\end{align}
It normalizes with respect to the sum of the \textit{weights} of the particles $\sum_{n=1}^N w_t^n$.
It is the more common of the options, as it allows for SIS even with access only to the probability mass / density functions $\pi'(a|s), \pi_\theta(a|s)$ up to a proportionality constant.
This estimator however is biased (although consistent), introducing bias of order $\O(1/N)$ \citep{cardoso2022br}, because the ratio of two unbiased estimators: $ \sum_{n=1}^N w_t^n \mathcal{R}(\tau^n_t) / \sum_{n=1}^N w_t^n,  $ is not itself unbiased \citep{chopin2020introduction}.

However, when we have access to the full densities / masses $\pi'(a|s), \pi_\theta(a|s)$, we can also use the estimator (Equation \ref{eq:RL_SMC_IS_Ws}):
\begin{align}
    \frac{1}{N}\sum_{n=1}^N w_t^n \mathcal{R}(\tau^n_t)
    \approx 
    \mathbb{E}_{\pi'}[R(\tau^n_t)] 
    =
    V^{\pi'}(s_1).
    \label{eq:un_biased_estimator}
\end{align}
This is an \textit{unbiased} estimator of expectations under the function $\mathcal{R}$.

In practice, the first estimator (Equation \ref{eq:biased_estimator}) is much lower variance, despite being biased, and we use it in the practical algorithm.

\subsection{Unbiased estimates of the value at any depth}
\label{app:unbiased_at_every_depth}
The unnormalized estimator provides unbiased estimates of the expectation at the root $s_1$.
However, we cannot simply use the same weights for any node under the root (the way we could for the self-normalizing estimator) and expect unbiased expectations, because the weights will be proportional to the probability of reaching those nodes. 
As a result, unlikely trajectories will have value estimates close to zero (because their weights will be close to zero) and using them directly to update the tree will effectively set all the values closer to zero.

This can be addressed however, by maintaining an array of weights per particle $n$, which save a weight per depth $t$.
At each depth $t$ the new-weight to depth $t$ is initialized with $1.0,$ and is updated in the same manner as all other weights at all depths.
As a result, a different weight is available per particle per depth to compute the expectation at each depth (i.e. each node along the trajectory).
This can effectively be thought of as "restarting" the SMC process at each depth guaranteeing that its properties apply for each depth and not just the root.

In practice however, the self normalizing estimator is the popular estimator which will be used in practice in most practical settings and much-reduced bias versions of it exist \citep{cardoso2022br}.

%% file: appendix/proofs.tex
\section{Formal Theoretical Analysis}
\label{app:proofs}

\subsection{MCTS as a principled policy improvement operator}
\label{app:proof_mcts_pi}

We begin with the formal Theorem \ref{thm:mcts_policy_improvement}, and proceed with its proof.
\begin{theorem}[MCTS as a Principled Policy Improvement Operator (Formal)]
    \label{thm:mcts_policy_improvement_formal}
    Given 
    selection policy $\pi_i $ (Equation \ref{eq:pi_i}),
    a deterministic true model $\mathcal{M}$,
    evaluations $ v(s_{T+1}) $ of leaves $s_{T+1}$ sampled independently from a distribution with mean $ V^{\pi_\theta}(s_{T+1}) $ and variance $ \leq \sigma^2 $ (i.e. the relaxed assumptions, which contain the classical),
    $\pi_\theta$ having support over the entire domain $\A$
    and in expectation with respect to backpropagation:
    $
        \nu_i(s_t) \gets 
        \E\big [\R(\tau_{t,T}) + V^{\pi_\theta}(s_{T+1})\big ],
    $
    MCTS satisfies Policy Improvement:
    \begin{align}
        \label{eq:mcts_iterative_pi}
        \forall i \geq 1: 
        \quad 
        V^{\pi_i}(s_1) 
        \geq 
        V^{\pi_\theta}(s_1).
    \end{align}
\end{theorem}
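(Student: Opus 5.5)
We prove the claim by induction on the iteration $i$. The invariant is stronger than the statement: at every node $s_t$ and after every iteration $i$, the tree value under expected backpropagation satisfies $q_i(s_t,a) \geq Q^{\pi_\theta}(s_t,a)$ for all $a$, including actions whose values are filled in by the completed-$Q$ mechanism. Once this invariant holds, $V^{\pi_i}(s_1)\geq V^{\pi_\theta}(s_1)$ follows from the regularized policy improvement step.

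\emph{Base case} ($i=1$). No action has been visited, so every $q_1(s_t,a)$ is a completed value built from $v_\phi$. Under the stated assumptions these evaluations are unbiased, so in expectation they equal $Q^{\pi_\theta}$ (respectively $V^{\pi_\theta}$ through $v_{\text{mix}}$). Hence the invariant holds with equality.

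\emph{Key lemma (one-step improvement).} Suppose $q\geq Q^{\pi_\theta}$ pointwise. Then the policy $\pi(a|s)\propto\pi_\theta(a|s)\exp(\beta q(s,a))$ satisfies
\begin{align*}
\sum_a \pi(a|s)Q^{\pi_\theta}(s,a)\geq V^{\pi_\theta}(s).
\end{align*}
The argument has two parts. First, $\pi$ is the maximizer of $\E_{\pi}[q]-\beta^{-1}\mathrm{KL}(\pi\|\pi_\theta)$. Comparing with $\pi_\theta$ gives $\E_\pi[q]\geq\E_{\pi_\theta}[q]$. Equivalently, $\pi$ reweights $\pi_\theta$ by a function that is increasing in $q$, and such a reweighting is positively correlated with $q$ by a Chebyshev-type covariance inequality. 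Second, to pass from $q$ back to the true $Q^{\pi_\theta}$, I would use the intended structure: the bound $q\geq Q^{\pi_\theta}$ enters only through the expected-backpropagation identity $q_i=r+\gamma v_i$, where $v_i$ is the value of an already improved policy. With that identity, the standard policy improvement theorem applied statewise (the performance difference lemma, with $\gamma$ and a finite horizon) yields $V^{\pi}(s)\geq V^{\pi_\theta}(s)$ at every state. The support assumption on $\pi_\theta$ ensures $\pi$ is well defined and that $\pi_\theta$ has full support.

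\emph{Inductive step.} Assume that for all $j\leq i$ and all nodes, $V^{\pi_j}(s_t)\geq V^{\pi_\theta}(s_t)$ and $q_j\geq Q^{\pi_\theta}$. Under expected backpropagation, $\nu_j(s_t)$ is the $\pi_j$-expected bootstrapped return from $s_t$. Because leaf values are unbiased for $V^{\pi_\theta}$ and the inner nodes follow $\pi_j\geq\pi_\theta$, this equals a value at least $V^{\pi_\theta}(s_t)$. I would prove that bound by a nested induction over depth, from the leaves upward, using the key lemma at each node. The tree value $v_{i+1}(s_t)$ is the visit-weighted average of $\nu_1,\dots,\nu_i$. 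It is therefore the value of a mixture policy over $\pi_1,\dots,\pi_i$, and each average term is at least $V^{\pi_\theta}(s_t)$, so the average is too. Hence $q_{i+1}=r+\gamma v_{i+1}\geq Q^{\pi_\theta}$. Unvisited actions keep their completed values, which remain unbiased in expectation. Applying the key lemma with $q_{i+1}$ gives $V^{\pi_{i+1}}\geq V^{\pi_\theta}$, which closes the induction and yields (\ref{eq:mcts_iterative_pi}) at $s_1$.

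\emph{Main obstacle.} The hardest step is making the key lemma rigorous when $q$ is only a lower bound on $Q^{\pi_\theta}$ rather than equal to it. Reweighting toward an overestimated $q$ does not by itself guarantee improvement with respect to the true $Q$. I would address this by using equality in expectation: under expected backpropagation, $q_i(s_t,a)$ equals exactly $Q^{\bar\pi}(s_t,a)$ for the mixture policy $\bar\pi$ used below $s_t$, which is the mixture-of-improved-policies structure cited from \citep{tsmcts}. Improvement with respect to the exact $Q^{\bar\pi}$ then gives $V^{\pi_{i+1}}\geq V^{\bar\pi}\geq V^{\pi_\theta}$ by the classical policy improvement theorem and transitivity.
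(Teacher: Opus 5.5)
Your overall architecture matches the paper's. Expected backpropagation makes $\nu_j$ the value of the selection policy $\pi_j$. The tree value is the value of the historical mixture $\bar\pi$; you average the inequalities directly rather than citing the mixture lemma, which is fine. The induction is then closed by the regularized step $\pi_{i+1}\propto\pi_\theta\exp(\beta_{i+1}q_{i+1})$.

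The gap is exactly the step you flag as the main obstacle, and your proposed resolution does not close it.

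\textbf{Why the fix fails.} The classical policy improvement theorem gives $V^{\pi_{i+1}}\geq V^{\bar\pi}$ only if $\sum_a\pi_{i+1}(a|s)Q^{\bar\pi}(s,a)\geq V^{\bar\pi}(s)$ at every state. But $\pi_{i+1}$ is a tilt of $\pi_\theta$, not of $\bar\pi$. The KL/Chebyshev argument therefore gives only $\sum_a\pi_{i+1}(a|s)Q^{\bar\pi}(s,a)\geq\sum_a\pi_\theta(a|s)Q^{\bar\pi}(s,a)$. The right-hand side can lie strictly below $V^{\bar\pi}(s)$, because $\bar\pi$ is itself sharpened. For instance, if $\bar\pi$ were optimal, a full-support tilt with finite $\beta$ would be strictly worse whenever it reaches a state with a strictly suboptimal action.

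Comparing against $\pi_\theta$ directly does not help. It requires $\sum_a\pi_{i+1}(a|s)Q^{\pi_\theta}(s,a)\geq V^{\pi_\theta}(s)$, which is your key lemma, and that lemma is false.

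\textbf{Counterexample.} The principle ``tilt $\pi_\theta$ by the $Q$-values of an improved policy, then follow the tilt everywhere'' genuinely fails.
\begin{itemize}
\item At $s_1$, action $b$ ends with reward $0.5$, and action $a$ leads to $s_A$, where $x$ pays $1$ and $y$ pays $0$.
\item $\pi_\theta$ is uniform at $s_1$, with $\pi_\theta(x|s_A)=0.01$.
\item $\bar\pi$ agrees with $\pi_\theta$ at $s_1$ but plays $x$ at $s_A$. It is improved: $V^{\bar\pi}(s_1)=0.75$, and $Q^{\bar\pi}\geq Q^{\pi_\theta}$ pointwise.
\end{itemize}
Tilting $\pi_\theta$ by $Q^{\bar\pi}$ with $\beta=2$ at both states gives $V(s_1)\approx 0.185<0.255=V^{\pi_\theta}(s_1)$. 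The root shifts toward $a$ on the strength of $\bar\pi$'s play at $s_A$, which the tilt does not reproduce.

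\textbf{What is missing.} You need a monotonicity invariant across iterations: at every tree edge, $Q^{\pi_{i+1}}(s,a)\geq q_{i+1}(s,a)$. In words, below $(s,a)$ the newest selection policy does at least as well as the historical mixture recorded in $q_{i+1}$. Given this and $q_{i+1}\geq Q^{\pi_\theta}$, you get
\begin{align*}
V^{\pi_{i+1}}(s)\geq\sum_a\pi_{i+1}(a|s)q_{i+1}(s,a)\geq\sum_a\pi_\theta(a|s)q_{i+1}(s,a)\geq V^{\pi_\theta}(s).
\end{align*}
The invariant is not automatic. It would have to come from MCTS-specific structure, namely:
\begin{itemize}
\item $\beta_i$ being nondecreasing in the visit counts, so that newer policies are at least as sharp as older ones. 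The example above violates exactly this, since $\bar\pi$ is greedier at $s_A$ than the new tilt.
\item Control of how the running averages $q_i$ lag the current values. If their ordering differs from the ordering of current values, a larger $\beta$ can make the new policy worse.
\end{itemize}
The paper's proof does not supply this step either; it asserts that $\I_{GMZ}(\pi_\theta,q_{i+1})$ improves on $\pi_\theta$ ``by definition''.

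\textbf{A smaller issue.} Completed values equal $v_{\text{mix}}(s_t)\approx V^{\pi_\theta}(s_t)$ for every unvisited action. They are therefore not unbiased for $Q^{\pi_\theta}(s_t,a)$, and your invariant fails for unvisited actions with $Q^{\pi_\theta}(s_t,a)>V^{\pi_\theta}(s_t)$. Your base case survives only because equal completed values make $\pi_1=\pi_\theta$.
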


\begin{proof}
    \mbox{}\\
    
    \textbf{Preliminaries:}
    Let $\pi_\theta$ be the prior policy with support over the entire domain (required for $\I_{GMZ}$ to be defined) and $v(s)$ be an unbiased value estimator such that $\mathbb{E}[v(s)] = V^{\pi_\theta}(s)$ with finite variance (e.g. rollouts).

    We define the MCTS iteration sequence for $i = 1, \dots, M$. 
    At each iteration $i$, a selection policy $\pi_i$ is determined using $q_i$ and $\pi_\theta$ as follows:
    \begin{align}
        \pi_i(a|s) = 
        \I_{GMZ}(\pi_\theta, q_i)(a|s)
        \propto \pi_\theta(a|s) \exp(\beta_i(s) q_i(s, a)).
    \end{align}
    We define the mixture policy $\pi^i$ as the policy which selects $ \pi_1, \dots, \pi_i $ with equal probability at state $s$, and follows the selected policy from $s$.
    \begin{lemma}[The expected backpropagation]
        \label{lemma:expected_backup}
        The expected backpropagation $\mathbb{E}[\nu_i(s_t)]$ at any node $s_t$ in the search tree is the value of the selection policy $ V^{\pi_i}(s_t)$.
    \end{lemma}
    
    \begin{proof}[Proof Lemma \ref{lemma:expected_backup}]
        Taking the expectation with respect to all random variables:
        \begin{align}
        \E[\nu_i(s_t)] 
        &=
        \E \left[ \sum_{j=t}^{T} \gamma^{j-t} r(s_j, a_j) + \gamma^{T+1-t} v(s_{T+1}) \right] \\
        &= 
        \E \left[ \sum_{j=t}^{T} \gamma^{j-t} r(s_j, a_j)  + \gamma^{T+1-t} \mathbb{E}[v(s_{T+1})]\right] \\
        &= 
        \E \left[ \sum_{j=t}^{T} \gamma^{j-t} r(s_j, a_j)  + \gamma^{T+1-t} V^{\pi_\theta}(s_{T+1})\right]
        \\
        &= V^{\pi_i}(s_t)
        \end{align}
        Since $\pi_i$ is followed until the leaf $s_{T+1}$, and $\pi_\theta$ is implicitly followed thereafter (via the value head $v$), this expression is precisely the value function of a policy that follows $\pi_i$ for $T$ steps and then switches to $\pi_\theta$. In the context of MCTS, this defines the value of the selection policy $\pi_i$.
    \end{proof}

    \begin{lemma}[Value of the Mixture Policy]
    \label{lemma:value_of_mixture_policy}
        The state-action value $q_{i+1}(s_t, a)$ with expected evaluations $\E[\nu_i(s_t)]$ at every step is the Q-value of the mixture policy $\pi^i$: $ q_{i+1}(s_t, a) = Q^{\pi^i}(s_t, a) $.
    \end{lemma}

    \begin{proof}[Proof Lemma \ref{lemma:value_of_mixture_policy}]
        Let iterations $i$ enumerate all iterations through node $s_t$ and action $a$ without loss of generality.
        By the definition of the MCTS node evaluations and Lemma 1:

        \begin{align}
        q_{i+1}(s_t, a) = \frac{1}{i} \sum_{j=1}^{i} Q^{\pi_j}(s_t, a)
        = \sum_{j=1}^{i} \frac{1}{i} Q^{\pi_j}(s_t, a)
        \end{align}
        
        By the linearity of the Q-function with respect to the policy distribution in an MDP:
        \begin{align}
            \sum_{j=1}^{i} \frac{1}{i} Q^{\pi_j}(s_t, a) = Q^{\pi^i}(s_t, a).
        \end{align}
        where $ Q^{\pi^i}(s_t,a)$ is the value of the policy which picks $\pi_{1}(s_t),\dots,\pi_i(s_t)$ with equal probability and acts with that policy to termination.
    \end{proof}
    
    \begin{lemma}
    \label{lemma:mixture_is_improved}
        A mixture of improved policies is an improved policy.
    \end{lemma}
    \begin{proof}[Proof Lemma \ref{lemma:mixture_is_improved}]
        Lemma \ref{lemma:mixture_is_improved} is proven by \cite{tsmcts} in their Lemma 1.
    \end{proof}

    \subsubsection{Proof by induction for Theorem \ref{thm:mcts_policy_improvement_formal}:}
        Using Lemmas \ref{lemma:expected_backup},\ref{lemma:value_of_mixture_policy},\ref{lemma:mixture_is_improved} we will prove the following by induction:
        \textit{
        If evaluations $ v(s_{T+1}) $ of leaves $s_{T+1}$ are sampled i.i.d. from a distribution with mean $ V^{\pi_\theta}(s_{T+1}) $ and variance $ \leq \sigma^2, $ $\pi_i = \I_{GMZ}(\pi_\theta, q_i)$ 
        and $\pi_\theta$ has support over the entire domain then in expectation with respect to search MCTS satisfies policy improvement:
        }
        \begin{align}
                \forall i \geq 1: \quad 
                V^{\pi_i}(s_1) 
                \geq V^{\pi_\theta}(s_1).
        \end{align}

    \paragraph{Inductive Hypothesis} For all $ i$: $V^{\pi_i}(s_t) \geq V^{\pi_\theta}(s_t), \,\, q_i(s_t,a) \geq Q^{\pi_\theta}(s_t,a) $.
        
    \paragraph{Base case $i = 1$} At the first iteration $i=1$ through any node $s_t$ (which is the expansion $s_t, a_t$) we have $\pi_1(s_t) = \pi_\theta(s_t)$. 
    By Lemma \ref{lemma:expected_backup} we have $q_i(s_t,a) = Q^{\pi_\theta}(s_t,a)$.
    QED base case.
    
    \paragraph{Inductive step} We need to show that $V^{\pi^{i+1}}(s_t) \geq V^{\pi_\theta}(s_t)$ and $q_{i+1}(s_t,a) \geq Q^{\pi_\theta}(s_t,a)$, given $ V^{\pi_i}(s_1) \geq V^{\pi_\theta}(s_1)$ and $q_i(s_t,a) \geq Q^{\pi_\theta}(s_t,a)$ the inductive hypothesis.

    From Lemma \ref{lemma:value_of_mixture_policy} $q_{i+1}(s_t,a) = Q^{\pi^i}(s_t,a), $ the value of the mixture policy $\pi^i$.
    From the inductive hypothesis, all $\pi_j, j \leq i $ are improvements over $\pi_\theta$.
    By Lemma \ref{lemma:mixture_is_improved} their mixture $\pi^i$ is also an improved policy: $V^{\pi^i}(s_1) \geq V^{\pi_\theta}(s_1)$.
    Therefore, the value $ q_{i+1}(s_t,a) = Q^{\pi^{i}}(s_t,a) \geq Q^{\pi_\theta}(s_t,a) $ is the value of an improved policy at any state $s_t$ and action $a$.

    The selection policy $\pi_{i+1}$ is defined with:
    \begin{align}
        \pi_{i+1}(a|s_t) = \I_{GMZ}(\pi_\theta,q_{i+1})(a|s),
    \end{align}
    which satisfies policy improvement over $\pi_\theta$ by definition.
    Therefore, we have: 
    \begin{align}
        V^{\pi_{i+1}}(s_1) 
        \geq V^{\pi_\theta}(s_1).
    \end{align}
    QED Inductive step.

    \paragraph{Conclusion} By induction, every selection policy $\pi_{i+1}$ generated by the MCTS operator with respect to expected evaluations $\E[\nu_i(s_t)]$ at every step is a policy improvement over the prior $\pi_\theta$ at any state $s_t$.
    By generality, this applies to the root state $s_1$ as well.

    Therefore we have, under the assumptions stated in the Theorem, that:
    \begin{align}
        \forall i \geq 1: V^{\pi_i}(s_1) \geq V^{\pi_\theta}(s_1).
    \end{align}
\end{proof}

\subsection{Simple PMCTS is a principled policy improvement operator}
\label{app:proof_simple_pmcts_policy_improvement}

We begin with the formal Theorem \ref{thm:simple_pmcts_policy_improvement}, and proceed with its proof.
\begin{theorem}[Simple PMCTS is a Principled Policy Improvement Operator (Formal)]
    \label{thm:simple_pmcts_policy_improvement_formal}
    Given 
    selection policy $\pi_i $ (Equation \ref{eq:pi_i}),
    a deterministic true model $\mathcal{M}$,
    evaluations $ v(s_{T+1}) $ of leaves $s_{T+1}$ sampled i.i.d. from a distribution with mean $ V^{\pi_\theta}(s_{T+1}) $ and variance $ \leq \sigma^2 $ (i.e. the classical assumptions),
    $\pi_\theta$ having support over the entire domain $\A,$ and when $N \to \infty,$
    Simple PMCTS satisfies Policy Improvement:
    \begin{align}
        \label{eq:simple_pmcts_iterative_pi}
        \forall i \geq 1: 
        \quad 
        V^{\pi_i}(s_1) 
        \geq 
        V^{\pi_\theta}(s_1), \quad \text{a.s.}
    \end{align}
\end{theorem}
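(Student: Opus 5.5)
The plan is to reduce Theorem \ref{thm:simple_pmcts_policy_improvement_formal} to Theorem \ref{thm:mcts_policy_improvement_formal}. The argument shows that, as $N \to \infty$, the particle backup of Simple PMCTS converges almost surely to the expected backpropagation $\E[\nu_i(s_t)] = V^{\pi_i}(s_t)$ of Lemma \ref{lemma:expected_backup}. Once that is established, the tree statistics $q_{i+1}$ coincide almost surely with those of the expected-backpropagation MCTS analysed earlier. The same induction then applies verbatim on the probability-one event.

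\textbf{Step 1: per-node strong law.} Fix an iteration $i$ and condition on the tree state $q_i$, hence on $\pi_i$. The $N$ trajectories are i.i.d. draws from $\pi_i$ in a deterministic model, and leaf evaluations are i.i.d. with mean $V^{\pi_\theta}(s_{T+1})$ and variance at most $\sigma^2$. Rewards are bounded, so each particle return $\nu^n(s_t)$ has finite variance.

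For a node $s_t$ reachable under $\pi_i$, let $p(s_t) > 0$ denote its reach probability. This probability is positive because $\pi_\theta$ has full support, so $\pi_i \propto \pi_\theta \exp(\beta_i q_i)$ does too. By the strong law of large numbers, $N(s_t)/N \to p(s_t)$ almost surely. Also $\frac1N\sum_n \mathbbm 1_{s_t\in\tau^n_T}\nu^n(s_t) \to p(s_t)\,\E[\nu(s_t)\mid s_t\in\tau]$ almost surely. Taking the ratio gives $\nu_i(s_t) \to V^{\pi_i}(s_t)$ almost surely, by the same computation as Lemma \ref{lemma:expected_backup} applied to the sub-trajectory from $s_t$. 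The tree is finite, so a finite intersection of probability-one events gives simultaneous convergence at all nodes.

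\textbf{Step 2: the weighted tree update.} The update in Equation \ref{eq:pmcts_backup} weights each iteration's backup by $N(s_t)$. Its weights are therefore $N(s_t)/M_{i+1}(s_t) \to p_j(s_t)/\sum_{k\le j} p_k(s_t)$ after normalising by $N$. The resulting $q_{i+1}(s_t,a)$ is then almost surely a convex combination $\sum_{j\le i}\alpha_j Q^{\pi_j}(s_t,a)$ with nonnegative weights, rather than the uniform average of Lemma \ref{lemma:value_of_mixture_policy}.

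I will note that Lemma \ref{lemma:value_of_mixture_policy} extends directly to this case: $q_{i+1}$ is the $Q$-value of the mixture $\pi^i_\alpha$ that picks $\pi_j$ with probability $\alpha_j$. Lemma \ref{lemma:mixture_is_improved} covers arbitrary mixtures, so $\pi^i_\alpha$ is still improved.

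\textbf{Step 3: induction.} Run the induction of Theorem \ref{thm:mcts_policy_improvement_formal} on the intersection of the probability-one events over $i=1,\dots,M$. This intersection is countable, indeed finite, so it still has probability one. The base case is identical. In the inductive step, $q_{i+1} = Q^{\pi^i_\alpha} \ge Q^{\pi_\theta}$ almost surely, and $\pi_{i+1} = \I_{GMZ}(\pi_\theta,q_{i+1})$ improves over $\pi_\theta$. Hence $V^{\pi_i}(s_1)\ge V^{\pi_\theta}(s_1)$ almost surely for all $i$.

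\textbf{Main obstacle.} The hard step is making the limit compatible with the iteration. The policy $\pi_{i+1}$ depends on the finite-$N$ statistics $q_{i+1}$, so "$N\to\infty$" must be interpreted carefully. I will first show that $\I_{GMZ}$ is continuous in $q$ and that $\beta_i$, after normalising counts by $N$, converges. Then convergence of $q_{i+1}$ implies convergence of $\pi_{i+1}$, and by a conditional strong law the next iteration's backups converge to $V^{\lim \pi_{i+1}}$.

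If handling the $\beta_i$ scaling with raw counts $M_i$ proves awkward, I will fall back on a weaker reading. That reading fixes the target policies as the limits, defining $\pi_{i+1}:=\I_{GMZ}(\pi_\theta,\lim_N q_{i+1})$, and proves the statement for that limiting sequence. Policy improvement holds for any $\beta\ge 0$, so the exact value of $\beta$ does not affect the conclusion.
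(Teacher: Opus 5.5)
Your proposal follows the paper's proof. You condition on the history so that $\pi_i$ is fixed, use full support and the strong law to get $\nu_i(s_t)\to V^{\pi_i}(s_t)$ almost surely at every node, intersect the countably many probability-one events, and rerun the induction of Theorem~\ref{thm:mcts_policy_improvement_formal}; your Step~2 and your concern about the dependence of $\pi_i,\beta_i$ on $N$ refine points the paper passes over when it asserts that Simple PMCTS ``coincides'' with expected-backpropagation MCTS (Step~2: the $N(s_t)$-weighted update gives a non-uniform convex combination of the $Q^{\pi_j}$, so Lemma~\ref{lemma:value_of_mixture_policy} is needed in weighted form), but with raw counts $\beta_i\to\infty$, so your continuity route would not go through as stated, and your fallback reading (fix $\pi_i$, take $N\to\infty$ within the iteration) is exactly the idealization the paper's proof implicitly uses.
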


\begin{proof}
    Consider iteration $i$ and condition on the search history prior to
    iteration $i$, which fixes the selection policy $\pi_i$.
    Since $\pi_i$ has support over all actions supported by $\pi_\theta$,
    every reachable node $s_t$ is visited with positive probability.
    Hence, as $N \to \infty$, the number of particles reaching every
    reachable node satisfies
    \[
        N_i(s_t) \to \infty
        \qquad \text{a.s.}
    \]

    At any such node $s_t$, the returns used for backpropagation are
    conditionally i.i.d., with mean
    \[
        \mathbb{E}\!\left[
            \mathcal{R}(\tau_{t,T})
            + V^{\pi_\theta}(s_{T+1})
            \,\middle|\, s_t
        \right]
        =
        V^{\pi_i}(s_t),
    \]
    and finite variance by assumption. Thus, by the strong law of large
    numbers, the empirical backpropagation estimate satisfies
    \[
        \nu_i(s_t)
        \xrightarrow[N\to\infty]{\mathrm{a.s.}}
        V^{\pi_i}(s_t).
    \]
    Therefore, in the limit $N \to \infty$, the backpropagation of
    Simple PMCTS is exactly the expected backpropagation of
    Theorem~\ref{thm:mcts_policy_improvement}, almost surely.

    Let $E_i$ denote the probability-one event on which this convergence
    holds for every node in iteration $i$. Since the set of iterations
    and the set of nodes are countable,
    \[
        \Pr\!\left(\bigcap_{i=1}^{\infty} E_i\right)=1.
    \]
    On this event, Simple PMCTS coincides with the expected-backpropagation
    MCTS of Theorem~\ref{thm:mcts_policy_improvement} at every iteration.
    Consequently, by Theorem~\ref{thm:mcts_policy_improvement},
    \[
        \forall i \geq 1:\qquad
        V^{\pi_i}(s_1)
        \geq
        V^{\pi_\theta}(s_1).
    \]
    Hence Simple PMCTS guarantees policy improvement at every iteration
    almost surely.
\end{proof}


\subsection{PMCTS is a principled policy improvement operator}
\label{app:proof_pmcts_policy_improvement}

We begin with the formal Theorem \ref{thm:pmcts_policy_improvement}, and proceed with its proof.
\begin{theorem}[PMCTS is a Principled Policy Improvement Operator (Formal)]
    \label{thm:pmcts_policy_improvement_formal}
    Given 
    selection policy $\pi_i $ (Equation \ref{eq:pi_i}),
    a deterministic true model $\mathcal{M}$,
    evaluations $ v(s_{T+1}) $ of leaves $s_{T+1}$ sampled i.i.d. from a distribution with mean $ V^{\pi_\theta}(s_{T+1}) $ and variance $ \leq \sigma^2 $ (i.e. the classical assumptions),
    $\pi_\theta$ having support over the entire domain $\A$
    and when $N \to \infty,$
    PMCTS satisfies Policy Improvement:
    \begin{align}
        \label{eq:pmcts_iterative_pi}
        \forall i \geq 1: 
        \quad 
        V^{\pi_i}(s_1) 
        \geq 
        V^{\pi_\theta}(s_1). \quad \text{a.s.}
    \end{align}
\end{theorem}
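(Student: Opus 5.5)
The proof reduces PMCTS to the expected-backpropagation MCTS of Theorem~\ref{thm:mcts_policy_improvement_formal}, following the structure of the proof of Theorem~\ref{thm:simple_pmcts_policy_improvement_formal}. The plan is to show that, conditional on the search history before iteration $i$ (which fixes $\pi_i$, $\hat\pi_i$ and $q_i$), the PMCTS backup $\nu_i(s_t)$ converges almost surely to $V^{\pi_i}(s_t)$ at every reachable node as $N\to\infty$. Once this holds, a countable intersection of probability-one events over iterations and nodes, exactly as in the Simple PMCTS proof, lets us invoke Theorem~\ref{thm:mcts_policy_improvement_formal} on that event.

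\textbf{Step 1: proposal support.} Since $\pi_\theta$ has full support, both $\pi_i$ and $\hat\pi_i \propto \pi_i^{1/\eta}$ have full support, and $\pi_i \ll \hat\pi_i$. The state and action spaces are finite, so the importance ratios $\pi_i/\hat\pi_i$ are bounded. Hence every particle weight $w^n_T$ is bounded, and every reachable node receives $N_i(s_t)\to\infty$ particles almost surely.

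\textbf{Step 2: self-normalized estimator.} At each node $s_t$, the particles passing through $s_t$ have i.i.d. trajectory suffixes drawn from $\hat\pi_i$ together with i.i.d. leaf evaluations. The sum $\frac{1}{N}\sum_n w^n_T \mathbbm 1_{s_t\in\tau^n_T}\nu^n(s_t)$ is an average of i.i.d. terms with finite mean, because the weights are bounded and the evaluation variance is at most $\sigma^2$. By the SLLN this sum converges a.s. to $\E_{\pi_i}[\mathbbm 1_{s_t}\nu]$. The normalizer converges a.s. to the $\pi_i$-probability of reaching $s_t$, which is positive. Their ratio, the self-normalized estimator of Equation~\ref{eq:pmcts_normalized_estimator}, therefore converges a.s. to $\E_{\pi_i}[\nu(s_t)\mid s_t]=V^{\pi_i}(s_t)$ by Lemma~\ref{lemma:expected_backup}. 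The prefix weights above $s_t$ cancel in the normalization only up to a common factor per path, so I would state this step conditional on reaching $s_t$, using the per-depth weight restart of Appendix~\ref{app:unbiased_at_every_depth} if needed.

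\textbf{Step 3: deduplication and ESS.} Deduplication only sums weights of particles sharing a leaf, and all those particles share the same return. It therefore leaves both sums in Step 2 unchanged, and the estimator is identical. The ESS enters only as the mixing weight in Equation~\ref{eq:wmcts_backup}. In the limit $\nu_i(s_t)$ is the deterministic value $V^{\pi_i}(s_t)$, so $v_{i+1}(s_t)$ is a convex combination of expected backups from past iterations, with positive weights $\text{ESS}_j(s_t)/M_{i+1}(s_t)$.

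\textbf{Step 4: adapting Lemma~\ref{lemma:value_of_mixture_policy}.} With these weights, $q_{i+1}$ is the Q-value of a non-uniform mixture of $\pi_1,\dots,\pi_i$. Linearity still holds, and Lemma~\ref{lemma:mixture_is_improved} applies to arbitrary convex mixtures. The inductive argument of Theorem~\ref{thm:mcts_policy_improvement_formal} then goes through unchanged.

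\textbf{Step 5: retrospective reweighting.} This is the main obstacle. At the parent $s_T$ the target becomes $\pi_{i+1}$ rather than $\pi_i$, so the backup converges to the value of a hybrid policy: $\pi_i$ above $s_T$ and $\pi_{i+1}$ at $s_T$, with $\pi_\theta$ after the leaf. My plan is to note that $\pi_{i+1}(s_T)=\I_{\text{reg}}(\pi_\theta,q_{i+1})(s_T)$ is itself an improved policy, since by the inductive hypothesis $q_{i+1}\geq Q^{\pi_\theta}$. The hybrid is then a per-state composition of improved policies. I would extend the inductive hypothesis to the statement that every backed-up value is the value of some policy that at every state plays an $\I_{\text{reg}}$-improvement of $\pi_\theta$ with respect to values dominating $Q^{\pi_\theta}$. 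Such policies dominate $\pi_\theta$ by the standard policy improvement theorem, and that closes the induction.
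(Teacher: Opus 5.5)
Your outline follows the paper's proof step for step: condition on the history, use full support, apply the strong law for self-normalized importance sampling, argue that deduplication leaves the estimator unchanged, treat the ESS update as a convex combination, absorb retrospective reweighting because $\pi_{i+1}(s_T)$ is again an $\I_{\text{reg}}$-improvement, and use a countable intersection to feed the induction of Theorem~\ref{thm:mcts_policy_improvement_formal}. Your hedge about prefix weights is unnecessary: all particles through a node share its unique root path, so the prefix weight cancels exactly. Two steps, however, do not hold as written.

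\textbf{Steps 2 and 3 use incompatible models of the leaf evaluations.} Step 2 needs each particle to carry its own independent draw $v(s^n_{T+1})$, so that evaluation noise averages out. Step 3 justifies deduplication by saying duplicates share one return. The choice matters because, after deduplication, $\text{ESS}(s_t)$ is at most the number of distinct trajectories through $s_t$. With a deterministic model, finite $\A$ and a depth-$i$ tree, that number is bounded independently of $N$. You rightly avoid the paper's claim that $\text{ESS}(s_t)\to\infty$, but the same bound means nothing is washed out as $N\to\infty$.

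If duplicates share one return, $\nu_i(s_t)$ converges to a $\pi_i$-weighted sum of finitely many single realized draws, which equals $V^{\pi_i}(s_t)$ only in expectation. The single value stored at each new node (with $M(s_{T+1})=1$) also keeps a non-vanishing weight in all later averages. The a.s.\ claim then fails. Take two root actions leading to states with $V^{\pi_\theta}$ equal to $1$ and $0$, zero rewards, a uniform prior and Gaussian evaluation noise. With positive probability the draws rank the bad state higher. Then $\pi_2(s_1)$ puts more than half its mass on the bad action, while $\pi_2=\pi_\theta$ below the root because the completed Q-values are constant there. This gives $V^{\pi_2}(s_1)<V^{\pi_\theta}(s_1)$ for every $N$.

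If instead draws are per particle, ``duplicates share one return'' is false. Keeping one return per merged particle, as ``setting duplicates to zero'' in Algorithm~\ref{alg:pmcts} does, then discards evaluations. The repair is to let deduplication, and the stored value of each new node, average the duplicates' evaluations. Then both sums in Step 2 are literally unchanged, every leaf value converges to $V^{\pi_\theta}$, and $\pi_\theta$ simply joins your mixture with weight $1/M_{i+1}(s_t)$.

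\textbf{The lemma that closes Step 5 is false as stated.} Take a one-state bandit with $Q^{\pi_\theta}=(1,0)$, uniform $\pi_\theta$ and $q=(1,10)\ge Q^{\pi_\theta}$. The policy $\I_{\text{reg}}(\pi_\theta,q)$ is worse than $\pi_\theta$. Dominating $Q^{\pi_\theta}$ does not yield $\sum_a\pi'(a|s)Q^{\pi_\theta}(s,a)\ge V^{\pi_\theta}(s)$. Taking $q=Q^{\mu}$ for some $\mu$ that dominates $\pi_\theta$ does not rescue it either, once $\pi'$ rather than $\mu$ is followed below $s$.

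What does suffice is a sandwich $Q^{\pi_\theta}\le q\le Q^{\pi'}$. Then $V^{\pi'}(s)\ge\sum_a\pi'(a|s)q(s,a)\ge\sum_a\pi_\theta(a|s)q(s,a)\ge V^{\pi_\theta}(s)$, where the middle inequality is the exponential-tilt property of $\I_{\text{reg}}$. The upper half of the sandwich is a monotonicity property that you would still have to prove: the new selection policy must do at least as well below $s$ as the mixture whose values it tilts toward. The ``by definition'' step in the paper's proof of Theorem~\ref{thm:mcts_policy_improvement_formal} rests on the same claim, so this gap is inherited rather than introduced by you.
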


\begin{proof}
    We show that in the limit $N \to \infty$, each of the mechanisms introduced by PMCTS preserves the expected-backpropagation limit of MCTS.

    Consider iteration $i$ and condition on the search history prior to
    iteration $i$. This fixes the selection policy $\pi_i$ and hence the
    proposal policy $\hat{\pi}_i$. Since $\pi_\theta$ has full support over
    $\mathcal{A}$ and $\pi_i$ and $\hat{\pi}_i$ are constructed from
    $\pi_\theta$ by strictly positive transformations, $\pi_i$ and
    $\hat{\pi}_i$ have the same support. Consequently, every trajectory
    with positive probability under $\pi_i$ also has positive probability
    under $\hat{\pi}_i$.

    Under the classical assumptions, the particles are sampled independently
    from $\hat{\pi}_i$, and their sequential importance weights are
    \[
        w_t^n
        =
        w_{t-1}^n
        \frac{\pi_i(a_t^n \mid s_t^n)}
        {\hat{\pi}_i(a_t^n \mid s_t^n)}.
    \]
    Thus, at any node $s_t$, the PMCTS backup $\nu_i(s_t)$ is a
    self-normalized importance-sampling estimator of the expectation under
    the target policy $\pi_i$. More generally, after retrospective
    reweighting, the same estimator is a self-normalized importance-sampling
    estimator under the updated target policy used by the subsequent MCTS
    update.

    Since the importance weights have finite expectation and the
    backpropagated returns have finite first moment, the strong law of large
    numbers for self-normalized importance sampling gives
    \[
        \nu_i(s_t)
        \xrightarrow[N \to \infty]{\mathrm{a.s.}}
        V^{\pi_i}(s_t),
    \]
    with the corresponding target policy after retrospective reweighting.
    The result also holds at every depth because PMCTS maintains
    depth-specific importance weights, so that the self-normalized estimator
    is applied to the particles contributing to each node.

    Deduplication does not alter this limit. It only aggregates particles
    corresponding to identical trajectories by summing their importance
    weights. Hence the weighted empirical distribution, and therefore the
    self-normalized estimator $\nu_i(s_t)$, is unchanged.

    After retrospective reweighting, the estimator converges to the value of the corresponding updated selection policy $\pi_{i+1}.$
    However, since this policy remains an improved policy which maintains the same properties as $\pi_i$, we overload the notation and set $\pi_i$ as the policy in Equation \ref{eq:pi_i} with respect to $q_i, \beta_i$ at all nodes that have not been expanded, and Equation \ref{eq:pi_i} with respect to $q_{i+1}, \beta_{i+1}$ at expanded nodes, without influencing the convergence or the recursion.

    The ESS-based update also preserves the expected-backpropagation limit.
    Since the SMC estimator is consistent,
    \[
        \operatorname{Var}[\nu_i(s_t)] \to 0,
        \qquad
        \frac{1}{\operatorname{ESS}(s_t)} \to 0
        \quad\text{a.s.}
    \]
    and hence $\operatorname{ESS}(s_t)\to\infty$ almost surely. Therefore,
    in the limit $N\to\infty$, the update
    \[
        v_{i+1}(s_t)
        =
        v_i(s_t)
        +
        \frac{\operatorname{ESS}(s_t)}
        {M_i(s_t)+\operatorname{ESS}(s_t)}
        \left(
            \nu_i(s_t)-v_i(s_t)
        \right)
    \]
    becomes a convex combination of the previous tree value and the exact
    expected-backpropagation value $\nu_i(s_t)$. The previous tree value is,
    by induction, the value of a mixture of policies that satisfy policy
    improvement, while $\nu_i(s_t)$ is the value of the newly improved
    selection policy. Thus the resulting $v_{i+1}(s_t)$ retains the same
    policy-improvement property as in Theorem
    \ref{thm:mcts_policy_improvement_formal}.

    Finally, consider the randomness across MCTS iterations. Let $E_i$
    denote the probability-one event on which the above convergence holds
    for every node involved in iteration $i$. Since the set of iterations
    and the set of nodes are countable,
    \[
        \Pr\left(\bigcap_{i=1}^{\infty} E_i\right)=1.
    \]
    On this event, PMCTS coincides with the expected-backpropagation MCTS
    in the limit $N\to\infty$, up to the convex averaging of values from
    previous improved policies. The proof of
    Theorem~\ref{thm:mcts_policy_improvement_formal} therefore applies by
    induction over iterations. Hence,
    \[
        \forall i \geq 1:\qquad
        V^{\pi_i}(s_1)
        \geq
        V^{\pi_\theta}(s_1),
        \qquad \text{a.s.}
    \]
    Thus PMCTS guarantees policy improvement at every iteration almost
    surely in the limit $N\to\infty$.
\end{proof}




\section{Worst-Case Complexity Analysis}
\label{app:complexity_analysis}
In this section we analyze the computation complexity of Simple PMCTS (Algorithm \ref{alg:simple_pmcts}) and PMCTS (Algorithm \ref{alg:pmcts}).
We begin with a brief complexity analysis of baseline MCTS, for reference.

\subsection{MCTS Worst Case Complexity Analysis}
MCTS is given a search budget of $M$ \textit{simulations}, i.e. iterations of selection, expansion, and backpropagation. Expansion takes a constant amount of time, and both selection and backpropagation scale linearly with the depth $d_i$ of the newly expanded leaf. $d_i \leq M$ since in each iteration, the depth of the tree increases by at most one. Thus we can bound the worst-case runtime complexity by $\O(M^2)$.

Each expansion step adds a new node to the tree, and the rest of the operations (selection and backup) use a constant amount of space, so in total the space complexity of MCTS is $\O(M)$.

\subsection{Simple PMCTS Worst Case Complexity}
We analyze the worst-case complexity of Simple PMCTS (and next PMCTS) with respect to the search budget of $M$ \textit{simulations} (i.e. iterations) and number of particles $N$.

\textbf{Selection:} Each particle can do selection completely independently, so the time taken again scales linearly in the maximum depth across particles $d_i$. Even though we may add up to $N$ nodes in each iteration, the depth still only increases by at most one, so we have $d_i \le M$ again.

\textbf{Expansion:} Expansion is done as a batch. We compute the value predictions, new prior policies, and add up to $N$ nodes to the tree. This can be done in constant time by having every particle create the new node in a predetermined spot based on its index. To connect the parent nodes to the new children, each particle tries to assign its child index to the parent, and in the case of contention, one arbitrarily wins out. Each particle then checks what was written and reassigns itself to the child node that won.

\textbf{Backpropagation:} Each particle can independently calculate its contribution to each node by tracing its trajectory in the tree. This will take a linear time in $d_i$. The contributions can then be combined for each state in parallel, either with a linear scan $\O(N)$ or tree reduction $\O(\log(N))$ \cite{tree-reduction}. Thus the time for backpropagation scales as $\O(d_i + \log(N))$.

\textbf{Total:} In each iteration we take $\O(d_i)$ for selection, $\O(1)$ time for expansion, and $\O(d_i + \log(N))$ for backpropagation. Over $M$ iterations this amounts to $\O(M (M + \log(N))) = \O(M^2 + M \log(N))$.

\textbf{Space:} Since each iteration can add at most $N$ new nodes, the space required for the search tree is $O(M N)$. Additional space is required during backpropagation to keep track of the contribution of each particle at each affected node: There are $N$ particles and each affected at most $d_i$ nodes, so $\O(d_i N) \leq \O(M N)$ space.

\subsection{PMCTS Worst Case Complexity}
\label{app:pmcts_complexity}

\textbf{Selection:} Selection is much the same as in Simple PMCTS. The few differences are that we must now compute a sampling policy $\hat{\pi_i}$ to sample from, and the particle weights must be updated. The weight update is constant-time per step, and when using a temperature-based sampling policy as in this work, $\hat{\pi_i}$ can also be computed in constant time (independently by each particle). This gives the same linear time $\O(d_i) \leq \O(M)$ for selection.

\textbf{Expansion} remains identical to Simple PMCTS, so $\O(1)$.

\textbf{Backpropagation:} Most of PMCTS's features come into play in this step. We compute the unweighted returns contributed by each particle in parallel in $O(d_i)$ time.
\textit{Retrospective reweighing} can be done in parallel for each particle in constant time.
De-duplication of particles can be done using \textit{reduce-by-key} in $\O(\log(N))$ time \cite{fast-deduplication} (i.e. by using a parallel sort, detecting boundaries, and then tree-reducing).
Computing ESS can be done for each affected state in parallel in time $\O(\log(N))$ using tree reduction for sums over particle weights, and parallelism across particles for operations such as squaring or dividing weights.
Calculating the weighted return similarly takes $\O(\log(N))$ time with tree reduction and can be done in parallel per state.
In total the worst-case time complexity for backup is $\O(d_i + \log(N))$.

\textbf{Total:} Similar to Simple PMCTS, the worst-case complexity of one iteration is $\O(M + \log(N))$ and $\O(M^2 + M\log(N))$ the entire algorithm.

\textbf{Space:} The space required for the search tree is $\O(MN)$. Space required for backpropagation is dominated by the weighted return and ESS calculations which need $\O(d_i N) \leq \O(M N)$ as before to store the contribution of each particle.

%% file: appendix/additional_prior_work.tex
\section{MCTS and Parallelization: A Brief Survey}
\label{app:more_related_work}

\subsection{Synchronous vs. asynchronous parallelism}
\label{app:sync_vs_async}
A system of parallel processes is called \textit{synchronous} if all processes run using the same clock, i.e., all operations are executed in lockstep (physically or conceptually).
It is called \textit{asynchronous} if each process has
its own independent clock, i.e. the different processes execute independently, potentially interleaving arbitrarily \cite{sync_vs_async}.

Asynchronous parallelization is very popular, as it allows multiple processes to run independently in parallel, and is indeed a popular way to implement virtual-visits based heuristic parallelization of MCTS \cite{analysis_virtuals}.

In contrast, modern GPU acceleration with Jax \cite{jax2018github} enables a form of semantically synchronous parallelization, typically expressed via batching or vectorization. 
In this setting, computations are defined so that elements of a batch are processed independently but conceptually in lockstep, with no observable ordering or interaction between them.
This has popularized synchronous parallelized acceleration of the RL framework, based on ``batched" operations, for training as well as search \cite{trtpi,tsmcts,gumbel_mz}.
As a result, it is much easier to benefit from GPU acceleration with Jax with conceptually synchronous parallelization.

\subsection{Parallel MCTS: A brief survey}
\label{app:more_parallel_mcts}
Many approaches for parallelizing MCTS under different practical constraints have been proposed over the years \cite{parallel_uct,virtual_loss,lock_free_parallel_mcts_1,parallel_mcts_2,lock_free_parallel_mcts_2,batch_mcts,malmsten2025transzero}.
Among them, the three canonical approaches are 
\textit{root parallelization}, \textit{leaf parallelization} and \textit{tree parallelization} \cite{parallel_uct,virtual_loss}. 

\textbf{Root parallelization} parallelizes across multiple instances of MCTS.
To achieve runtime inference scaling, all instances run from the same root state and aggregate their results once search has concluded.
This approach builds search trees of the same size, and in expectation each tree is the same.
It also relies heavily on stochasticity in the selection and/or evaluation and thus is not well suited to modern uses of MCTS that rely on fully deterministic selection and evaluation.

A variant of this approach is very effective for scaling up during training however.
Instead of running multiple instances of MCTS in the same state in parallel and aggregating their results, the search can be parallelized across $N$ \textit{different} states.
The results are then aggregated in a training batch over these states.
This approach is the go-to in Jax based implementations of training RL agents with MCTS for policy improvement (e.g. AlphaZero variants) \cite{mctx}. 

\textbf{Leaf parallelization} parallelizes the evaluation of the same leaf across $N$ parallel processes. 
This is only effective when evaluation is stochastic, and thus is not suited for modern algorithms either.

\textbf{Tree parallelization} parallelizes the search with multiple parallel, synchronous or asynchronous, \textit{selection, expansion} and \textit{backpropagation} processes operating on the same tree. 
The immediate challenge these methods run into are duplicate selection trajectories.
To overcome this challenge, the canonical approach in practice is the \textit{virtual visits} family of heuristics.
To reduce the probability that different processes attempt to expand the same leaf, the main idea is that when a process arrives at a node during selection, it appends a count of "virtual" visits \textit{before} the expansion and backpropagation steps complete.
These visits are taken into account by other processes arriving at the same name, \textit{as if} the node was fully updated with backpropagation.

This approach is very popular in practice, and has been very successful for benefiting from parallel compute during inference time.
However, it is not principled, in the sense that there are no guarantees over which policy is evaluated by the tree at each iteration.

Additional approaches have iterated on the notion of virtual visits with \textit{informed}-virtual visits, using smaller DNNs or other forms of cheaper evaluations to direct the search, while spending the cost for expensive evaluations only later or only on specific nodes \citep{speculate_parallel_mcts,speculative_tree_traversal,fast_slow_nets_pmcts}.

Prior work analyzed this direction theoretically from the perspective of \textit{how much worse than regular MCTS can the decisions induced by virtual visits be} \citep{on_effective_parallelization_of_mcts}.
In contrast, in this work we develop methods that both do as well as MCTS in expectation (unbiased), and reduce variance in proportion to the number of parallel processes.

Overall, a large number of approaches have investigated using virtual visits in different implementations with different levels of synchronous and asynchronous parallelization \citep{virtual_loss,parallel_uct,lock_free_parallel_mcts_1,parallel_mcts_2,lock_free_parallel_mcts_2,batch_mcts}.

\textbf{Other approaches} for parallelizing MCTS have been proposed. \cite{malmsten2025transzero} replaces the DNN-learned dynamics model of MuZero \cite{muzero} with a transformer, to unroll multiple action trajectories in the tree in a batch.
This approach is not suited for MCTS in general, only for DNN-based dynamics models, modifies the backpropagation of MCTS, and does not provide any theoretical analysis or guarantees of policy improvement with respect to the modified selection policies.

%% file: appendix/additional_results.tex
\section{Additional Results}
\label{app:additional_results}

Scaling across $M$ in an example environment is investigated in Figure \ref{fig:scaling_and_sims_gardner}.
\begin{figure}[H]
    \vspace{-0.3cm}
    \centering
    \includegraphics[width=1\linewidth]{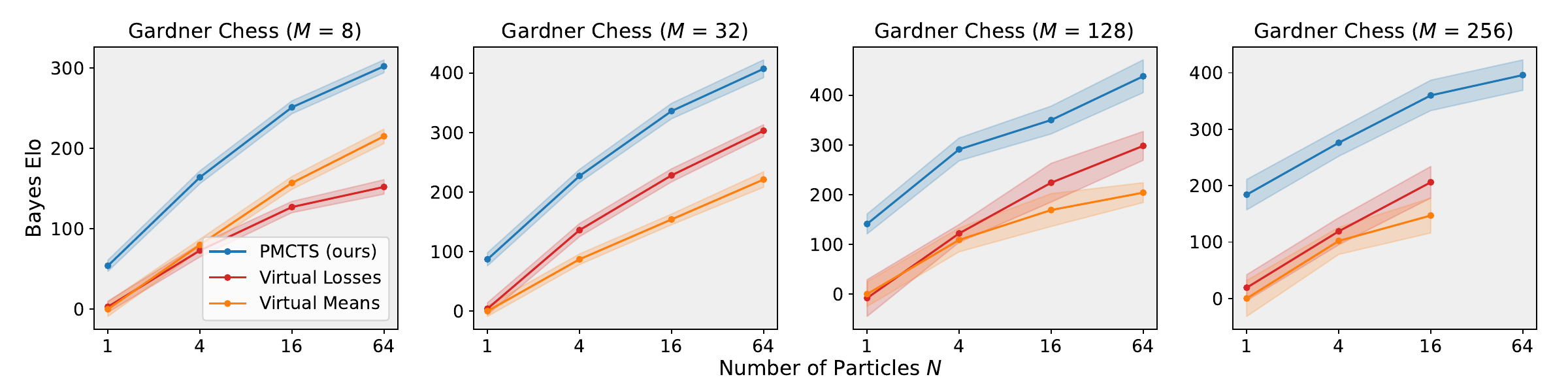}
    \vspace{-0.65cm}
    \caption{
    Scaling of parallel MCTS variants with parallel compute (number of particles $N$) and simulation budgets $M$ in Gardner chess. Bayes Elo with 95\% CI.
    }
    \label{fig:scaling_and_sims_gardner}
    \vspace{-0.4cm}
\end{figure}

The individual boardgames results aggregated in Figure \ref{fig:runtime_scaling} are presented in Figure \ref{fig:indiv_boardgame_results}.

\begin{figure}[H]
    \vspace{-0.3cm}
    \centering
    \includegraphics[width=1\linewidth]{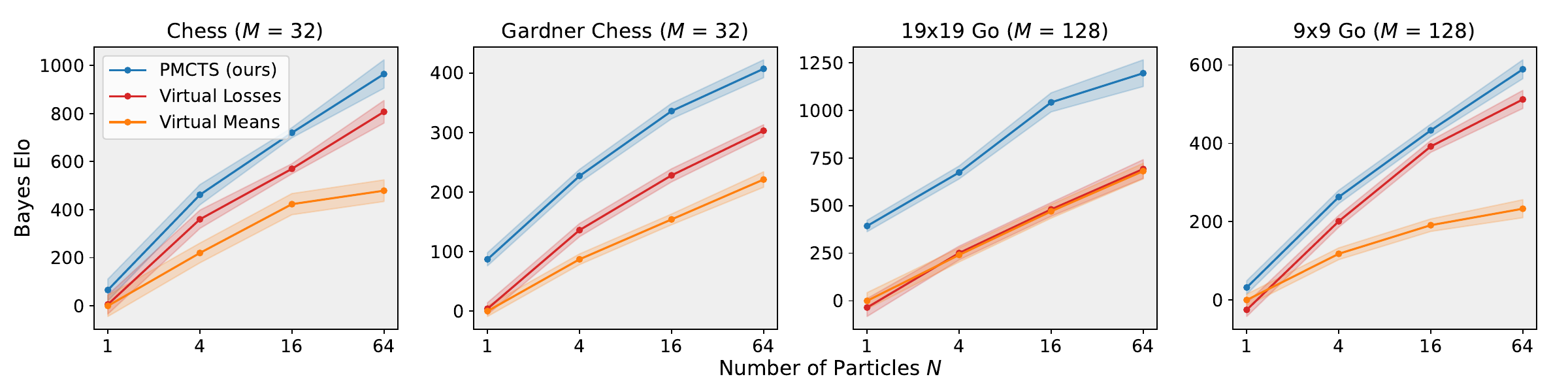}
    \vspace{-0.65cm}
    \caption{
    Scaling of tree parallel MCTS variants with parallel compute (number of particles $N$) in different boardgames. Bayes Elo with 95\% CI.
    }
    \label{fig:indiv_boardgame_results}
    \vspace{-0.4cm}
\end{figure}


The individual discrete and continuous action results aggregated in Figure \ref{fig:runtime_scaling} are presented in Figure \ref{fig:all_smz_envs}.

\begin{figure}[H]
    \vspace{-0.3cm}
    \centering
    \includegraphics[width=1\linewidth]{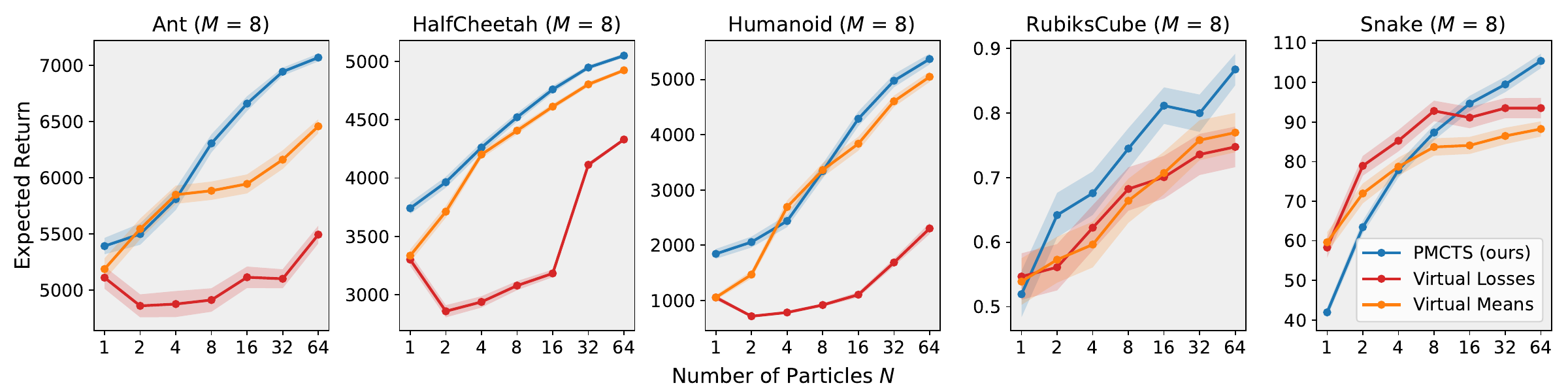}
    \vspace{-0.65cm}
    \caption{
    Scaling of parallel MCTS variants with parallel compute (number of particles $N$) across different environments in the discrete and continuous domains. 
    Mean expected return with $ 95\%$ Gaussian CI.
    }
    \label{fig:all_smz_envs}
    \vspace{-0.4cm}
\end{figure}

A comparison between tree-parallel MCTS with PMCTS and root-parallel MCTS is presented in Figure \ref{fig:pmcts_vs_gmcts}, demonstrating that indeed, root-parallelization is not an effective runtime-scaling approach when DNNs are used for evaluation.
For this comparison, we use a modern version of non-parallel MCTS, GumbelMCTS, and run it vectorized across different instances at the same state, which also allows us to compare the performance of PMCTS to a popular version of non-parallel MCTS.

\begin{figure}[H]
    \vspace{-0.3cm}
    \centering
    \includegraphics[width=1\linewidth]{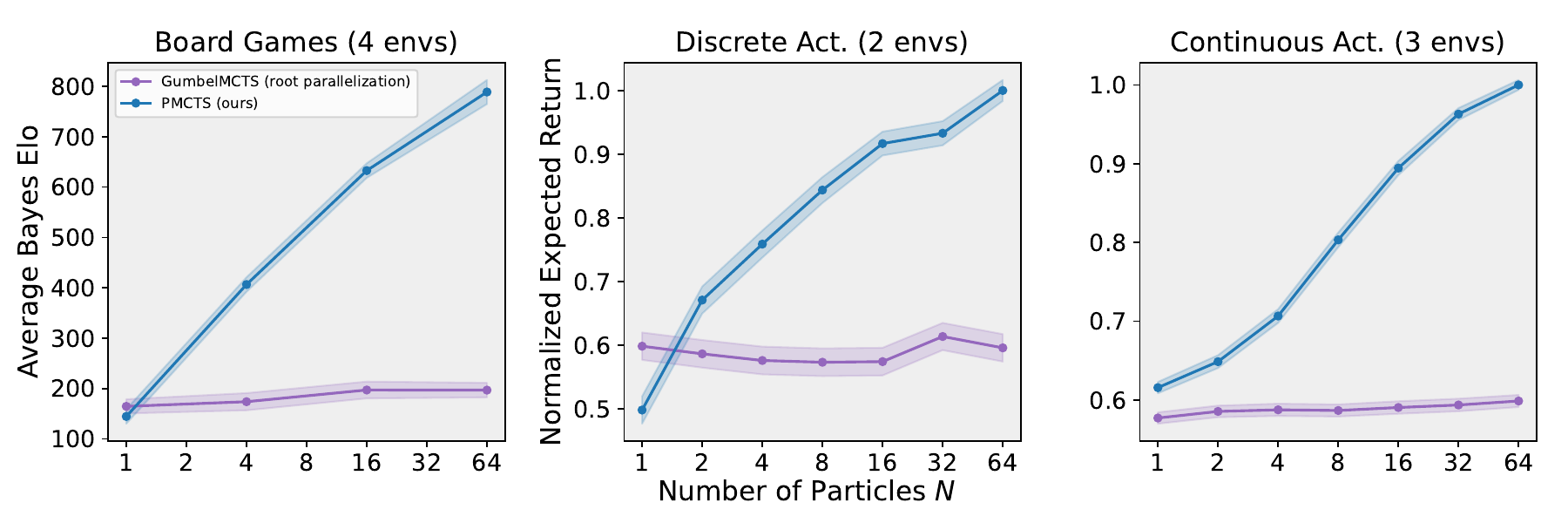}
    \vspace{-0.65cm}
    \caption{
    Scaling of tree parallel MCTS with PMCTS vs, root parallel MCTS with GumbelMCTS across boardgames, discrete and continuous environments. 
    Average Bayes Elo or expected return with $ 95\%$ CI.
    }
    \label{fig:pmcts_vs_gmcts}
    \vspace{-0.4cm}
\end{figure}

In Figure \ref{fig:action_selection_ablation} we investigate the effect of different action selection mechanisms across the different baselines.
We find that our experiments use the best action selection mechanism per algorithm. \textit{Improved Policy} refers to selecting the action that maximizes the improved policy. \textit{Max. Visits} chooses the action with the most visits. For GumbelMCTS, we compare different ways to aggregate across trees, specifically averaging the improved policy (\textit{Aggregate Policy}), averaging the Q-values (\textit{Aggregate Values}), and choosing the action picked by the most independent searches (\textit{Aggregate Voting}), see Appendices \ref{app:acting_in_eval} and \ref{app:action_selection_and_improvement_survey} for more detail.

\begin{figure}[H]
    \vspace{-0.3cm}
    \centering
    \includegraphics[width=1\linewidth]{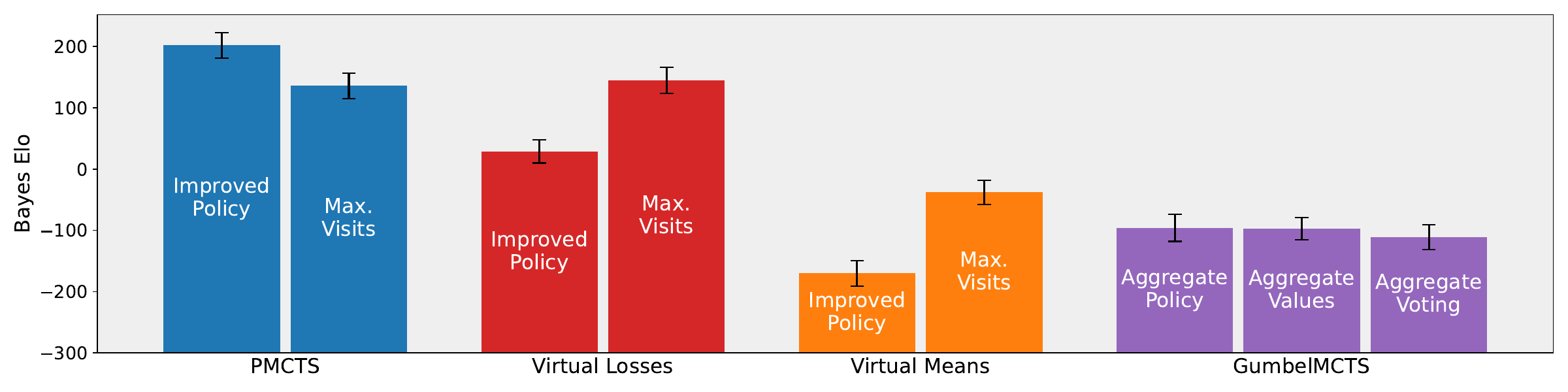}
    \vspace{-0.65cm}
    \caption{
    Action selection ablations across the different baselines, in 9x9 Go ($M=128,N=16$).
    Bayes Elo with $ 95\%$ CI.
    }
    \label{fig:action_selection_ablation}
    \vspace{-0.4cm}
\end{figure}

We include a comparison to TSMCTS and SMC on Garnder chess and 9x9 Go, demonstrating their challenge to perform well with a larger simulations budgets (and smaller particle budgets, in the case of TSMCTS), in Figure \ref{fig:smc_tsmcts_comparison}.

\begin{figure}[H]
    \vspace{-0.3cm}
    \centering
    \includegraphics[width=1\linewidth]{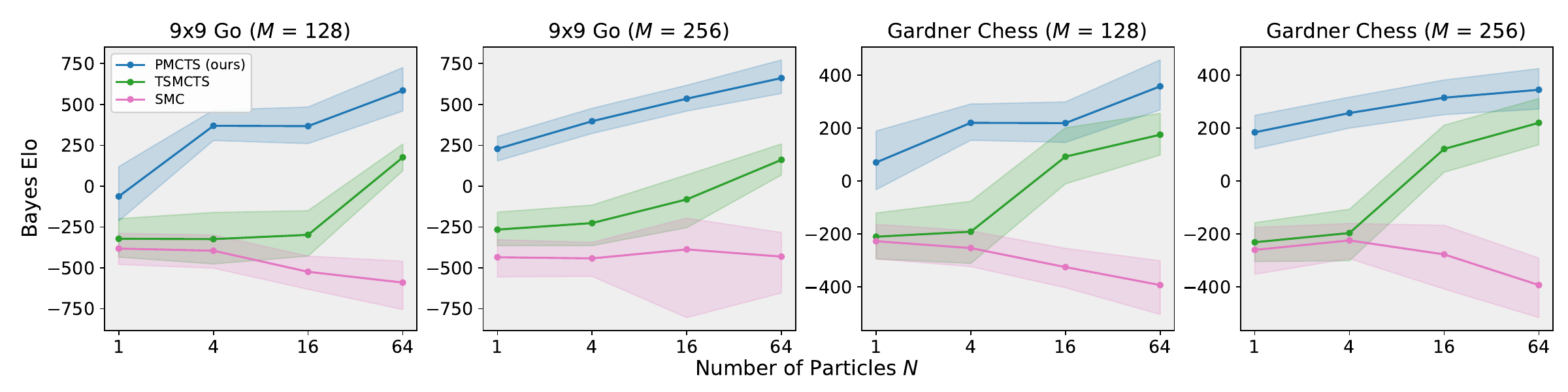}
    \vspace{-0.65cm}
    \caption{
    PMCTS vs. TSMCTS and baseline SMC in Gardner chess and 9x9 Go, with larger simluation budgets $M = 128, 256.$
    Bayes Elo with $ 95\%$ CI.
    }
    \label{fig:smc_tsmcts_comparison}
    \vspace{-0.4cm}
\end{figure}

To investigate how the temperature $\eta$ trades off between increased number of unique trajectories and reduced divergence from the target $\pi_i$ we include search statistics averaged over multiple trees at different states in different games of 9x9 Go with $M=128, N = 64$ (unless otherwise specified), in Figure \ref{fig:statistics}.
On the left, we see that as the temperature increases, the number of unique trajectories increases, as expected.
Second from left, we see that as the temperature increases, the divergence between the sampling policy $\hat \pi_i$ and the target policy $\pi_i$ increases, as expected.
On the third from the left we see that the temperature parameter that consistently achieves the lowest variance $\mathbb{V}[\nu_i],$ estimated with 1/ESS -- the best tradeoff -- is $\eta = 1.5$ (marked in blue for clarity).
We also see that it achieves a rather-large number of unique trajectories (left), and rather-low KL divergence (second from left), as expected from the temperature that trades off the best between the two to minimize variance.
Finally, we can see that indeed this results in the best performance (right), as expected.

\begin{figure}[H]
    \vspace{-0.3cm}
    \centering
    \includegraphics[width=1\linewidth]{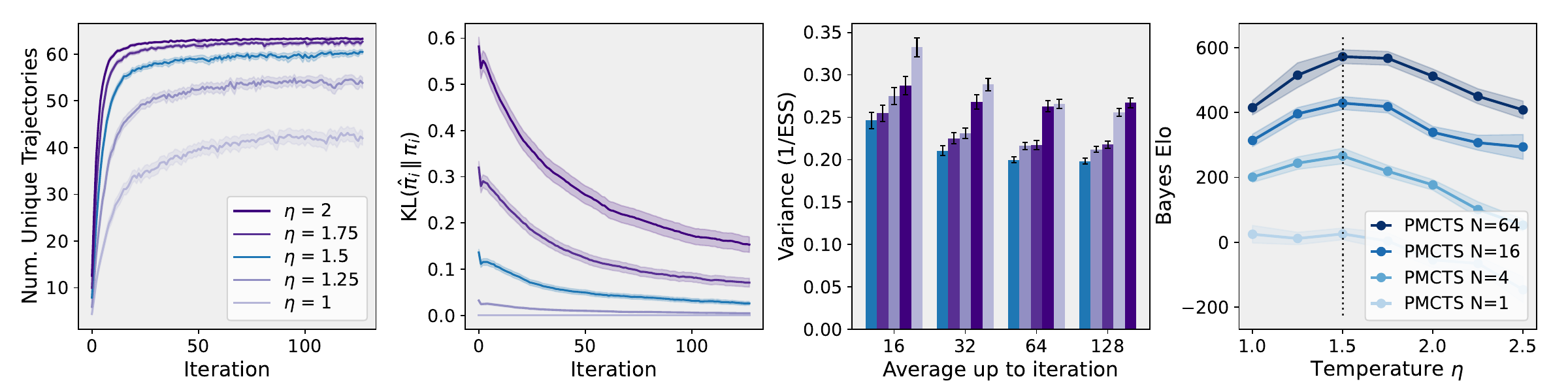}
    \vspace{-0.65cm}
    \caption{
    Search statistics for PMCTS on 9x9 Go with $M=128$. Means and 95\% Gaussian or Bayes Elo CIs, where relevant.
    The temperature that achieves the best tradeoff $\eta = 1.5,$ the one that minimizes variance (second from right), is also the one that achieves the best performance (right).
    }
    \label{fig:statistics}
    \vspace{-0.4cm}
\end{figure}

%% file: appendix/implementation_details.tex
\section{Implementation Details}
\label{app:imp_details}

\subsection{The search policies PUCT and UCT}
\label{app:puct_and_uct}
\begin{align}
    \pi_{i}^{\text{UCT}}(s) &:= 
    \argmax_{a \in A}
    q_{i}(s_t, a) + \pi_\theta(a | s_t) \, C 
    \sqrt{\frac{ \log M_i(s_t)}{1 + M_i(s_t, a)}}.
    \\
    \pi_{i}^{\text{PUCT}}(s) &:= 
    \argmax_{a \in A}
    q_{i}(s_t, a) + \pi_\theta(a | s_t) \, C 
    \frac{ \sqrt{M_i(s_t)}}{1 + M_i(s_t, a)}.
\end{align}

\subsection{Improved Policy, Value and Action Selection - A Brief Survey of Design Choices}
\label{app:action_selection_and_improvement_survey}
At the end of search, P/MCTS has $q_M$ and $\pi_M$ from which it needs to return an action to take in the environment, an improved policy $\pi_{search}$ to train the policy network $\pi_\theta$, and value $V_{search}$ to train the value network $v_\phi$.
This can be (and is) done in multiple different ways, which we discuss below.

\textbf{MCTS for Heuristic Policy Improvement:}
In AlphaZero \citep{AlphaZero}, MCTS is used to train the policy.
As prior to \cite{grill2020monte}'s work, MCTS was not theoretically interpretable as a policy improvement operator, it was not clear how to extract a target for the prior policy.
\cite{AlphaZero} chose the visitations distribution $M(s_1,a)$:
\begin{align}
    \pi^{\text{classical}}_{\text{search}}(a|s_1) = \frac{M(s_1,a)}{\sum_{b \in \A} M(s_1,b) }
\end{align}
Although this does not guarantee policy improvement \citep{gumbel_mz}, it has been extremely successful in practice and paved the way for the great influence MCTS-based agents had on RL.

It is natural to then compute the root value and select an action to take as:
\begin{align}
    V_{\text{search}}(s_1) = \sum_{a \in \A} \pi^{\text{classical}}_{\text{search}}(a|s_1)q_{M}(s_1,a), \quad a \sim \pi^{\text{classical}}_{\text{search}}(s_1).
\end{align}
When we want a deterministic action, we can use select the action with the most visits $M(s_1, a)$.

\textbf{MCTS as a Policy Improvement Operator}
\cite{grill2020monte} showed that MCTS \textit{tracks}, \textit{approximates} and therefore \textit{can be interpreted as} a policy improvement operator.
More specifically, their results can be summarized as follows:
(I) The popular selection policies UCT and PUCT \textit{converge over iterations} (e.g. \textit{track}) to the solution of an optimization problem of the form:
\begin{align}
    \pi'(s) = \argmax_{\pi'} \sum_{a \in \A} \pi'(a|s)q_i(s,a) - \lambda_i D(\pi',\pi_\theta).
    \label{eq:grills_finding_optimization_problem}
\end{align}
Here, $D$ is some regularization term and $\lambda_M$ is a function of the visitation counts $M(s,a)$. 
Further, \cite{grill2020monte} showed that PUCT corresponds to the the specific regularization term $D(\pi',\pi_\theta) = KL(\pi_\theta,\pi')$, i.e. the reverse KL divergence.
We apply the following definition of the $KL(x,y)= \sum_{a \in \A}x(a)\log \frac{x(a)}{y(a)}$.

Next, \cite{grill2020monte} showed that the solution $\pi'$ (Equation \ref{eq:grills_finding_optimization_problem}) can be used directly, for three purposes:
First, for selection actions in the environment. Second, for training the policy. And third, for driving the search itself, replacing deterministic PUCT with stochastic $\pi'$.

The operator $\pi' = \I(\pi_\theta,q_M)$ which solves Equation \ref{eq:grills_finding_optimization_problem} is not a very popular operator, and requires solving an optimization problem numerically to find the normalization constant of $\pi'$.
\cite{gumbel_mz} showed that it can be replaced by $\I_{\text{reg}}$ effectively.

Respectively, this suggests the following natural return values:
\begin{align}
    \pi_{\text{search}}(a|s_1) = \I(\pi_\theta, q_M), \,\, V_{\text{search}}(s_1) = \sum_{a \in \A} \pi_{\text{search}}(a|s_1)q_{M}(s_1,a), \,\, a \sim \pi_{\text{search}}(s_1).
\end{align}

\textbf{Sequential Halving and Completed Qs:}
\cite{gumbel_mz} introduced \textit{Sequential Halving} (SH) to search at the root.
SH is a bandit algorithm which optimizes for \textit{simple-regret, best action identification} given \textit{known} search budget $M$.
In contrast, PUCT was designed for \textit{anytime, cumulative-regret, best action identification}.
SH can be interpreted in two different ways:
First, it is just a budget-allocation algorithm, which assigns different actions different action budgets, without a direct influence on the action values themselves. 
The one exception to this statement is that in expectation, the values of all actions \textit{increase} with increased search resources (Theorem \ref{thm:mcts_policy_improvement}).
This was modeled as \textit{drift} by \cite{UCT}.
From the perspective of policy improvement however, irrespective of the resource allocation at the root, the algorithm remains an unbiased estimator of policy improvement.

Second, SH can be viewed as a \textit{best-action selection} algorithm. 
At the last iteration of SH, it outputs a recommended-best action.
\cite{gumbel_mz} used SH to select actions in the environment.
However, they found that in order to retain sufficient exploration in board games, the agents benefited from sampling proportional to $\pi^{classical}_{search}$, the policy induced directly by the visitation counts $M(s,a)$.

Additionally, \cite{gumbel_mz} proposed the completed Q values, which predicts the value $q(s,a)$ of unvisited actions $M(s,a) = 0$ with a mixture of the form:
\begin{align}
    \label{eq:completed_qs}
    q_i(s,a) &= 
    \begin{cases} 
        q_i(s,a) & \text{if} M(s,a) > 0 \\  
        v_{\text{mix}}(s) & \text{if} M(s,a) = 0 
    \end{cases},
    \\
    v_{\text{mix}}(s) &= \frac{1}{1 + \sum_a M(s, a)} \left( v_\phi(s) + \frac{\sum_a M(s, a)}{\sum_{a \in A} \pi_\theta(a|s)} \sum_{a \in A} \pi_\theta(a|s) q_i(s, a) \right),
    \\
    A &= \{a : M(s, a) > 0\}.
\end{align}
The value $v_{\text{mix}}(s)$ is intended as a more stable estimator of $V^{\pi_\theta}(s)$ than the single DNN prediction $v_{\phi}(s)$.
\cite{gumbel_mz} showed that using estimators of this form preserves policy improvement.

This distills to the following return values:
\begin{align}
    \pi^{\text{GMZ}}_{\text{search}}(a|s_1) &= \I_{\text{reg}}(\pi_\theta, q_M), \quad V_{\text{search}}(s_1) = \sum_{a \in \A} \pi^{\text{GMZ}}_{\text{search}}(a|s_1)q_M(s_1,a), \\
    a &\sim \pi^{\text{classical}}_{\text{search}}(s_1), \quad \text{or} \quad a = \text{SH recommended action}.
\end{align}

\textbf{Putting it all together}
Recent work from the SMC for search literature \citep{tsmcts} uses SH only for search budget allocation at the root $s_1$ to great benefit. 
Since this does not bias the search, it maintains the same policy improvement properties of $\pi_{search}$.

However, the completed Qs, although sound for policy improvement, may result in large mass for unvisited actions.
This results in potentially a very high entropy policy for action selection in the environment, which may be catastrophic in environments where one bad move can terminate the agent (falling of cliffs, or catastrophic moves in boardgames).

Putting all of this together, we opt for the following:
\begin{align}
    \label{eq:actual_search_return_scores}
    \pi_{\text{search}}(a|s_1) &= \I(\pi_\theta, q_M), \quad V_{\text{search}}(s_1) = \sum_{a \in \A} \bar \pi(a|s_1)q_M(s_1,a), \quad a \sim \bar \pi(s_1). \\
    \bar \pi(a|s_1) &\propto 
        \begin{cases}
            \pi_{\text{search}}(a|s_1) \quad \text{if:} \quad  M(s_1,a) > 0 \\
            0 \quad \quad \quad \quad \quad \quad  \text{if:} \quad  M(s_1,a) = 0
        \end{cases}
\end{align}
The policy $\bar \pi$ has mass only on visited actions, and otherwise is proportional to $\pi_{\text{search}}$.
As a result, the root value $V_{\text{search}}(s_1)$ only incorporates $q_M$ of visited actions (e.g. no completed Q values participate in the value computation).
Similarly, the acting policy is not over-entropic and only incorporates information from visited actions (e.g. again, no completedQ values participate in the value computation).
$\bar \pi_{\text{search}}$ and can be thought of as a more principled way to achieve the sampling used by GumbelMCTS $ a \sim \pi^{\text{classical}}_{\text{search}}(s_1) $.

Using $V_{\text{search}}$ to generate value targets \citep{oren2025epistemic} has been shown to be sound by \citep{viac}, despite $V_{\text{search}}$ not being the value of the policy $\pi_\theta$.

\subsection{The Virtual Losses and Virtual Means Heuristics}
\label{app:virtuals}
We use two popular variations of \textit{virtual visits}: Virtual Means (VM) \cite{batch_mcts} and Virtual Losses (VL) \citep{analysis_virtuals, virtual_loss}.
Under these heuristics, the deterministic search policy is computed as follows:
\begin{align}
    \pi_{\text{VL},i}^{\text{PUCT}}(s) &:= 
    \argmax_{a \in A}
    q^{\text{VL}}_{i}(s_t, a) + \pi_\theta(a | s_t) \, C 
    \frac{ \sqrt{M_i(s_t) + M^{\text{virtual}}_i(s_t)}}{1 + M_i(s_t, a) + M^{\text{virtual}}_i(s_t,a)}.
    \\
    \pi_{\text{VM},i}^{\text{PUCT}}(s) &:= 
    \argmax_{a \in A}
    q_{i}(s_t, a) + \pi_\theta(a | s_t) \, C 
    \frac{ \sqrt{M_i(s_t) + M^{\text{virtual}}_i(s_t)}}{1 + M_i(s_t, a) + M^{\text{virtual}}_i(s_t,a)}.
\end{align}
This is the PUCT formula \cite{puct} but including the virtual visits added by unevaluated particles / processes. $q^{\text{LV}}$ is $q_i$ except each virtual visit is treated as a \textit{loss}, i.e. a return of $-1$.
We keep the implementation of VM and VL consistent with mctx's \cite{mctx} PUCT implementation: the Q values of unvisited actions are completed with the value prediction $v_\phi(s)$ and normalized after virtual losses are added. 
After normalization, the normalized Q-values for unvisited actions are set to 0. 

\subsection{Sequential Halving with P/MCTS}
\label{app:sh_pmcts}
GumbelMCTS \cite{gumbel_mz} modifies the search at the root of the tree with Sequential Halving (SH, \cite{sh}).
The SH algorithm is a budget-allocation algorithm, which takes as input \textit{number of actions to search} $K$ and divides the search into $\log_2 K$ iterations. 
Each iteration is assigned a search budget $ B = M / \log_2 K. $
At each iteration $i \geq 1$, the number of searched actions halves $ K_i = K / 2^{i-1} $, and assigned equal budget: $ B / K_i. $

When implementing SH with particles, the budget is treated as $ M N$ instead of $N$, and divided accordingly. To keep the parallelization across $N$ intact, $N $ particles are used during search at every iteration. 
See \cite{tsmcts} for more detail.

SH interacts with PMCTS: on the one hand, it allows PMCTS to drive diversity by setting large $ K = N$ for example.
On the other hand, in later iterations, all particles are forced to search the same actions, which can reduce trajectory diversity.
As a result, it's not clear whether SH should increase performance or reduce performance when coupled with PMCTS.
In our experiments, we were not able to see benefit from using a parallel, particle-based variation of SH at the root of PMCTS.

\subsection{Baselines}
\label{app:baselines_implementation}
All baselines are implemented within our PMCTS framework.
Virtual Means / Losses are thus also implemented as batch-parallel and synchronous. This means that all "particles" (what would be threads or processes) have to select trajectories fully sequentially in order to communicate the virtual losses added. If we attempted to select all actions simultaneously, all "particles" would pick the same action, since Virtual Means / Losses are deterministic.
Their backpropagation is the same as used by PMCTS, but without the PMCTS-specific features such as ESS, de-duplication, etc.
As a result, Virtual Means / Losses only parallelize the expansion well, which is not the best case for these algorithms.
In more standard implementations of Virtual Means / Losses, selection and backup are done asynchronously with atomics or locks to deal with the situation where multiple threads are at the same node at the same time. Even in these implementations however, in the worst case, when \textit{all} threads are at the same node at the same time, the selection is effectively sequential.

GumbelMCTS and PUCT MCTS are implemented in our setup, to keep comparison as direct as possible, based on the official implementation \cite{mctx}. We have tested our implementation against \cite{mctx} and match it almost perfectly, with remaining differences attributed to floating point inaccuracies.

\textbf{Root-parallelization with GumbelMCTS} We conduct independent search on $N$ copies of the state with different random keys and then aggregate the results. In continuous action environments the "action space" at the root is shared by sampling $K$ actions from $\pi_\theta$ once and treating this as a discrete action space \cite{sampled_mz}. From that point on, every search tree is unique.
Once search has concluded, we aggregate across search trees by taking either the mean of the Q values (in continuous environments) or the mean of the policy (in board games): $\bar q_M(s,a) = \frac{1}{N}\sum_{i = 1}^N q^i_M(s,a)$ or $\bar \pi = \frac{1}{N} \sum_{i=1}^N \pi_M^i$.

%% file: appendix/experimental_details.tex
\section{Experimental Details}
\label{app:experimental_details}

\subsection{Environments and Scaling Experiments Across Domains}
\label{app:scaling_experiments}
We evaluate PMCTS empirically across four domains and 10 environments: four two-player zero-sum board games (chess, Gardner chess, 19x19 Go and 9x9 Go) implemented by PGX~\cite{koyamada2023pgx};
Two standard discrete-control single-agent problems (Snake and RubiksCube) implemented by Jumanji~\cite{bonnet2024jumanji};
Three classical continuous control tasks (Ant, Humanoid and HalfCheetah) implemented by Brax~\cite{brax}.
One LLM-scaling inference in the environment Sokoban \cite{bonnet2024jumanji} described in more detail in Appendix \ref{app:llm_experiment}.



We have DNN models ($v_\phi$ and $\pi_\theta$ pairs) for each environment: In 9x9 Go and Gardner chess we use the baseline models provided by PGX. For full chess and 19x19 Go we trained our own networks as PGX does not supply those (architecture is described in Appendix \ref{app:net_archs}). In the discrete and continuous control environments we use three models (three pairs), trained independently for different numbers of steps. We use checkpoints prior to convergence to keep the evaluation as meaningful as possible.

For each environment, agent, particle budget $N$ and model, we run multiple evaluation episodes.
In the board game environments, this amounted to each variant (agent and particle budget $N$) playing multiple games against each other variant, and computing Bayes Elo, which also provides a 95\% confidence intervals. The games started from positions in an opening book, generation of which is explained in \ref{app:opening_book}.
In the single-player environments, this amounted to each variant (environment, agent, particle budget $N$ and model) running 256 episodes.
We then compute the mean and a $\pm 2$-SEM based $> 95\%$ confidence interval. 
The SEM is across both models and games (Figure \ref{fig:all_smz_envs}).
In the aggregated results presented in Figure \ref{fig:scaling}, each individual run is first normalized between minimum possible return (0 in these environments) and maximum observed across all agents in each environments.
Then the results are aggregated per agent and a similar mean and SEM-based $> 95\%$ confidence bound is computed and presented.

\subsection{Opening Book}
\label{app:opening_book}

All agents in the board-game scaling experiments are deterministic. Starting every match from the initial position would make all matches identical, so we use an opening book: a set of starting states. 
Each environment's book contains 20{,}000 unique positions, generated by sampling from the policy network up to a predetermined move count, then filtering for roughly balanced positions. A position is roughly balanced when that model's value prediction lies in $[-0.3, 0.3]$. The number of moves sampled per environment is given in Table~\ref{tab:hps_board_games}.

\subsection{Acting in the Environment}
\label{app:acting_in_eval}
The different agents act as follows:
PMCTS selects the actions which maximizes the improved policy at the root: $ \argmax_a \pi_{search}(a|s_1). $
Virtual Means / Losses act in the standard manner of this heuristic, by taking the action with the most visitations: $\argmax_a M(s_1,a). $
The aggregated GumbelMCTS is implemented as taking the action which maximizes the \textit{aggregated} improved policy (Appendix \ref{app:baselines_implementation}). 
See Figure~\ref{fig:action_selection_ablation} for the effect of action selection for each agent.

In the learning experiment, GumbelMCTS picks an action stochastically from the improved policy in the first 8 moves, and after that picks deterministically the action recommended by SH. 
Similarly, in the learning experiment PMCTS selects by sampling from $\pi_{search}(s)$ at the first 8 actions, and then deterministically taking the $ \argmax_a \bar \pi(a|s) $ (see Appendix \ref{app:action_selection_and_improvement_survey}).


\subsection{Evaluation Metrics}
\label{app:eval_metrics}
In the standard RL environments, we use the observed averaged return across multiple independent evaluation episodes.
In the board game environments, we use BayesElo \cite{bayeselo}, a commonly used metric for comparing the relative strength of agents in two-player zero-sum games. Elo rating difference is related to the win-rate between agents. 
Bayes Elo finds the maximum likelihood ratings using MM \cite{minorization-maximization} where the prior on the Elo distributions is assumed to be uniform.
Elo is shift agnostic:
only the relative Elo between agents matters, not the absolute score.
To compute the average Elo across board games we shift a single agent to 0 in every environment (the $N=1$ Virtual Means), shift the Elo scores of all other agents in each environment accordingly, and then take the average across environments per agent.
For the policy improvement experiment in Go, we plot the win-rate against the PGX baseline agent.

We configure BayesElo with first-player advantage fixed at zero, since whenever we simulate a match between two agents, we play the the same opening from both sides. Reported intervals are BayesElo's 95\% confidence intervals.



\subsection{Discrete vs. Continuous Action Spaces}
\label{app:continuous_action_mcts}
To extend PMCTS and MCTS to continuous actions we follow the popular approach of SampledMZ \citep{sampled_mz}, which samples $C$ actions from $\pi_\theta$ at each node in the search tree and treats $\{a_1,\dots,a_C\}$ as a discrete action space with uniform logits.

\subsection{Runtime Experiments}
\label{app:runtime_experiment}
We measured the runtime for every algorithm at every particle budget (powers of 2 from $1$ to $128$) with a simulation budget of 128. Each measurement consisted of 3 warmup runs to remove the overhead of JIT compilation, followed by 10 trials of 10 runs each. Every run used a different input state (generated ahead of time) to reflect real use.

We plot the mean across trials, in effect the mean across all runs, and the 95\% confidence intervals computed from the trials.

\subsection{Policy Improvement Experiment}
\label{app:training_experiment}
We use \cite{koyamada2023pgx} implementation of a simple on-policy AlphaZero training loop for the policy improvement experiments on board games.
The training hyperparameters (learning rate, batch sizes, etc.) were tuned for the baseline GumbelMCTS based AlphaZero, and were not modified and thus are likely to favor the baseline agent (purple line in Figure \ref{fig:ablations}, right).

\subsection{LLM inference scaling on Sokoban}
\label{app:llm_experiment}
In this experiment, the LLM is playing instances of the game of Sokoban in which we use text-based rendering (ASCII).
Each Sokoban instance is a procedurally generated solvable 1-box puzzle. 
The model we used was Qwen2.5-14B-Instruct.
The LLM supplies the prior policy over the actions to search at each step.
For the value approximation at the leaves uniform random rollouts are used with a true model of the environment.

The system prompts used were:

You are solving a Sokoban puzzle. You control the player (P) on a grid.
Your goal: push all boxes (X) onto the targets (O).

Symbols:
  \# wall, \_ floor, O target, P player
  X box, V box on target, S player on target.

Rules:
\begin{enumerate}
    \item Move up, down, left, or right.
    \item Push a box by moving into it (space behind must be empty).
    \item You cannot pull boxes or push two at once.
    \item Boxes cannot be pulled back. A box pushed into a corner or against a wall may be stuck forever — plan multiple steps ahead before pushing.
\end{enumerate}

First reason about which moves are useful inside <think>...</think> tags.
Then choose exactly one action inside <action>...</action> tags.
Example: <think>The box is above me and the target is above the box. Pushing up will place it on the target without blocking anything.</think><action>up</action>

Then in each step, the system prompt is concatenated with this message:

Step \{step\_count\} of \{max\_steps\}.

\{board\_text\}

Moves so far: \{action, action, ...\} (only present if move\_history is non-empty).

Your move?

We do not concatenate the entire history into the LLM as that would results in very expensive runtime.

%% file: appendix/hyperparameters.tex
\section{Hyperparameters}
\label{app:hyperparams}

\subsection{Hyperparameter tuning}
\label{app:hyperparams_tuning}
All hyperparameters except the temperature parameter $\eta$ introduced in this work follow \cite{gumbel_mz} as published in \cite{mctx}.
Search over $\eta$ is presented in Figure \ref{fig:ablations}, where $\eta = 1.5 $ dominates all other constant temperatures.
We leave the investigation of more complex temperature schedules depending on state, policy entropy, number of particles at the state, etc. to future work.

\subsection{Network architectures}
\label{app:net_archs}
Different architectures are used in different environments.
In the discrete-action and continuous-action single-agent environments, we use the architectures used by \cite{trtpi}.
These are MLPs in all environments except Snake, where a CNN followed by an MLP is used.
See \cite{trtpi} for a full specification.



In the board game experiments we use the PGX \cite{koyamada2023pgx} baseline models for 9x9 Go and Gardner chess: small AlphaZero-style ResNets with 6 layers and 128 filters. For chess and 19x19 Go where we trained our own models, we used the same architecture as the PGX models, except for 19x19 Go we used 16 residual blocks instead of 6.

\subsection{Hyperparameters}
This subsection covers the hyperparameters used for the experiments. \autoref{tab:hps_shared} gives the hyperparameters shared among experiments: These are the PUCT parameters used by Virtual Means and Virtual Losses, and the Q-normalization used by both PMCTS and Gumbel MCTS. \autoref{tab:hps_board_games} lists hyperparameters specific to the experiments in 9x9 Go and Gardner Chess. \autoref{tab:hps_cont_disc} details the episode length per environment used. The sampled actions refer to how many actions we sample from the continuous action space for search. Finally, \autoref{tab:hps_policy_opt} gives hyperparameters for the policy optimization experiment.

\begin{table}[h]\centering
\caption{Shared Hyperparameters for All Experiments}
\begin{tabular}{lll}
\toprule
Category & Hyperparameter & Value \\
\midrule
PUCT (used by Virtual Means / Losses) & $c_{\text{base}}$ & 19652 \\
                                      & $c_{\text{init}}$ & 1.25 \\
\midrule
Q-value Scaling for the Improved Policy & $\beta_i$ & $(c_{\text{visit}} + \max_a M_i(s,a))\, c_{\text{scale}}$ \\
                                        & $c_{\text{visit}}$ & 50 \\
                                        & $c_{\text{scale}}$ & 0.1 \\
\bottomrule
\end{tabular}
\end{table}

\begin{table}[H]
\centering
\caption{Hyperparameters for Board Game Experiments}
\begin{tabular}{@{}llc@{}}
\toprule
\textbf{Category} & \textbf{Hyperparameter} & \textbf{Value} \\ \midrule
Gumbel MCTS
 & Sampled Actions & 16 \\
 & Gumbel Scale & 0.0 \\ \midrule
PMCTS
 & Temperature $\eta$ & 1.5 \\ \midrule
Model
 & Model for 9x9 Go & PGX baseline v0 \\
 & Model for Gardner Chess & PGX baseline v0 \\ \midrule
Opening Book
 & Size per Environment & 20000 \\
 & Allowed Value Range & $[-0.3, 0.3]$ \\
 & Steps for 9x9 Go & 12 \\
 & Steps for 19x19 Go & 12 \\ 
 & Steps for Gardner Chess & 8 \\ 
 & Steps for Chess & 8 \\ \bottomrule
\end{tabular}
\label{tab:hps_board_games}
\end{table}

\begin{table}[h]\centering
\caption{Training hyperparameters for the chess and 19x19 Go evaluation networks.}
\begin{tabular}{lll}
\toprule
Hyperparameter & Chess & 19x19 Go \\
\midrule
Search used in self-play        & GumbelMCTS & GumbelMCTS \\
Simulation budget $M$           & 24         & 32 \\ 
Considered actions              & 8          & 16 \\
Discount                        & 0.997      & 0.997 \\
Self-play steps per iteration   & 256        & 512 \\
Self-play batch size            & 1024       & 1024 \\ 
Training batch size             & 256        & 512 \\ 
Learning rate                   & $10^{-3}$  & $10^{-4}$ \\
Momentum                        & -          & 0.9 \\
Training iterations             & 250        & 670 \\
\bottomrule
\end{tabular}
\end{table}

\begin{table}[H]
\centering
\caption{Hyperparameters for Continuous and Discrete Environments}
\begin{tabular}{@{}llcc@{}}
\toprule
\textbf{Environment} & \textbf{Type} & \textbf{Episode Length} & \textbf{Bootstrapped Actions} \\ \midrule
Snake &	Discrete (jumanji) & 4000 &	- \\
RubiksCube & Multi-Discrete (jumanji) & 20 & - \\
Ant	& Continuous (brax) & 1000  & 30 \\
HalfCheetah	& Continuous (brax) & 1000 & 30 \\
Humanoid & Continuous (brax) & 1000 & 30 \\ \bottomrule
\end{tabular}
\label{tab:hps_cont_disc}
\end{table}

\begin{table}[H]
\centering
\caption{Hyperparameters for Policy Optimization Experiment (AlphaZero Training)}
\begin{tabular}{@{}llc@{}}
\toprule
\textbf{Category} & \textbf{Hyperparameter} & \textbf{Value} \\ \midrule
Network
 & Architecture & ResNet V2 \\
 & Number of Filters & 128 \\
 & Number of Residual Blocks & 6 \\ \midrule
Self-Play
 & Batch Size & 1024 \\
 & Simulations per Step & 24 \\
 & Max Number of Steps & 256 \\
 & Random Plies & 8 \\ \midrule
Gumbel MCTS
 & Sampled Actions & 8 \\
 & Gumbel Scale & 1.0 \\ \midrule
PMCTS
 & Number of Particles & 4 \\
 & Temperature $\eta$ & 1.5 \\ \midrule
Training
 & Batch Size & 256 \\
 & Discount & $0.997$ \\
 & Optimizer & Adam \\
 & Learning Rate & $0.001$ \\ \bottomrule
\end{tabular}
\label{tab:hps_policy_opt}
\end{table}